\DeclareMathOperator*{\argmin}{argmin}
\newtheorem{example}{Example}[section]
\theoremstyle{plain}
\newtheorem{theorem}{Theorem}[section]
\newtheorem{proposition}[theorem]{Proposition}
\newtheorem{lemma}[theorem]{Lemma}
\theoremstyle{definition}
\newtheorem{assumption}[theorem]{Assumption}
\theoremstyle{remark}
\newtheorem{remark}[theorem]{Remark}
\icmltitlerunning{Colorful Pinball: Density-Weighted Quantile Regression for Conditional Guarantee of Conformal Prediction}
\begin{document}

\twocolumn[
  \icmltitle{Colorful Pinball: Density-Weighted Quantile Regression for Conditional Guarantee of Conformal Prediction}

  \begin{icmlauthorlist}
    \icmlauthor{Qianyi Chen}{yyy}
    \icmlauthor{Bo Li}{yyy}
  \end{icmlauthorlist}

  \icmlaffiliation{yyy}{School of Economics and Management, Tsinghua University, China}

  \icmlcorrespondingauthor{Bo Li}{libo@sem.tsinghua.edu.cn}

  \icmlkeywords{Machine Learning, ICML}

  \vskip 0.3in
]


\printAffiliationsAndNotice{}  

\begin{abstract} 
    Although conformal prediction provides robust marginal coverage guarantees, achieving reliable conditional coverage for specific inputs remains challenging. 
    While exact distribution-free conditional coverage is impossible with finite samples, recent work has focused on improving the conditional coverage of standard conformal procedures.    
    Distinct from approaches that target relaxed notions of conditional coverage, we directly target the mean squared error of conditional coverage by refining the quantile regression components that underpin many conformal methods.
    Leveraging a Taylor expansion, we derive a sharp surrogate objective for quantile regression: a density-weighted pinball loss, where the weights are given by the conditional density of the nonconformity score evaluated at the true quantile. 
    We propose a three-headed quantile network that estimates these weights via finite differences using auxiliary quantile levels at $1-\alpha \pm \delta$, subsequently fine-tuning the central quantile by optimizing the weighted loss. We provide a theoretical analysis with exact non-asymptotic guarantees characterizing the resulting excess risk. Extensive experiments on diverse high-dimensional real-world datasets demonstrate remarkable improvements in conditional coverage performance.  
    We release the code at \href{https://github.com/Cqyiiii/Colorful-Pinball-Conformal-Prediction-CPCP}{\texttt{CPCP Github repo}}.
\end{abstract}


\section{Introduction}

\begin{figure*}[t]
    \centering
    \includegraphics[width=\textwidth]{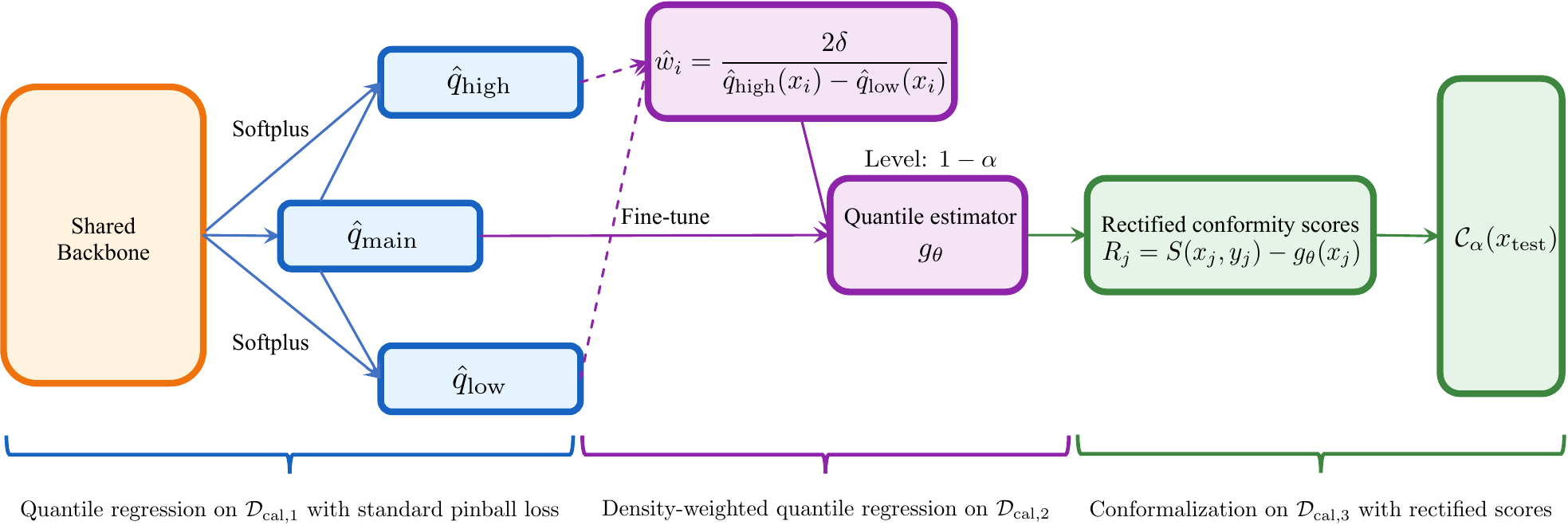}  
    \caption{\textbf{Illustration of the Colorful Pinball Conformal Prediction framework.} CPCP first estimates finite-difference density weights from auxiliary quantiles of the nonconformity-score distribution, then uses these weights to turn the standard pinball loss into a density-weighted objective for fine-tuning the target quantile. This reweighting emphasizes regions where the conditional score distribution is steep at the target quantile, precisely where small quantile errors induce large conditional-coverage errors. The auxiliary quantiles $\hat q_{\mathrm{high}}$ and $\hat q_{\mathrm{low}}$ are constructed by adding and subtracting Softplus-activated gap estimates from the central quantile $\hat q_{\mathrm{main}}$, preventing quantile crossing and ensuring nonnegative finite-difference weights before the final rectified-score conformalization step.}
    
    \label{fig:algo_illustration}    
\end{figure*}

As machine learning systems are increasingly deployed in high-stakes environments, the demand for reliability extends beyond predictive accuracy to rigorous uncertainty quantification (UQ). In safety-critical domains, a model's ability to provide appropriate predictive intervals is often as valuable as accurate point predictions. Among existing UQ frameworks, Conformal Prediction (CP; \citealp{vovk2005algorithmic,shafer2008tutorial}) has emerged as a paradigm of choice due to its mathematically grounded guarantees. Unlike Bayesian methods or ensemble techniques that often rely on strong distributional assumptions or heavy computational overhead, CP offers a distribution-free, model-agnostic framework. It constructs prediction sets or intervals that provably contain the ground truth with a user-specified probability $1-\alpha$ (e.g., 90\%) in finite samples, providing a layer of statistical trust that is indispensable for real-world deployment.

However, standard split conformal prediction only provides a marginal guarantee over the population and cannot guarantee conditional coverage for specific instances---precisely what practitioners require in high-stakes scenarios. 
Although hardness results for exact distribution-free conditional guarantees with finite samples are well-established~\cite{vovk2012conditional,lei2014distribution,foygel2021limits}, a growing body of works targets improving the conditional coverage of conformal procedures. To this end, many works seek to relax the conditional requirement~\cite{angelopoulos2024theoretical}, e.g., group-conditional coverage~\cite{jung2023batch,ding2023class}.

Instead, this paper targets controlling the exact conditional coverage rather than relaxed forms. A natural objective is the MSE of conditional coverage---termed Mean Squared Conditional Error (MSCE) in~\citet{kiyani2024conformal}. We consider this metric since minimizing the MSCE provides high-probability control over conditional coverage deviations via Bernstein's inequality. We build upon the connection between MSCE and the excess risk of the pinball loss established in recent literature~\cite{kiyani2024conformal,plassier2025rectifying}. Specifically, these works show that the MSCE is upper-bounded by the excess risk of a quantile estimator with pinball loss, suggesting that improving the quantile estimator directly translates to better conditional coverage.

However, we identify that this standard upper bound is often loose, as it relies on a uniform Lipschitz constant for the conditional cumulative distribution function (CDF) of the nonconformity scores, $F_{S|X}(s)$. Using a Taylor expansion, we derive a significantly sharper approximation of MSCE: the excess risk of the pinball loss weighted by the conditional density of the score evaluated at the true quantile, $f_{S|X}(q(x))$. This formulation reveals the latent heteroscedastic structure within the objective; for instance, in the location-scale family, this weighting term is proportional to the inverse scale, $1/\sigma(x)$. 

Motivated by this insight, we propose a three-stage framework named \textbf{Colorful Pinball Conformal Prediction (CPCP)}, as shown in Figure~\ref{fig:algo_illustration}. Our method first estimates the density-based weights via joint quantile regression, then fine-tunes the target quantile by minimizing the weighted pinball loss, and finally rectifies the original nonconformity scores using the fine-tuned quantile estimator. Theoretically, we establish exact \textbf{non-asymptotic} results for the generalization error of the weighted risk, which is of broader interest in problems involving estimated reciprocal weights. To ensure robustness, our algorithm incorporates specific mechanisms to mitigate practical issues such as quantile crossing, unstable inverse weights, and Taylor approximation errors, enabling superior performance on extensive benchmarks. In brief, our main contributions are fourfold: 
\begin{itemize} 
    \item We identify the inadequacy of naive quantile regression regarding conditional coverage and construct a significantly sharper approximation of the MSCE. 
    \item We propose a principled algorithmic framework to directly target the MSCE with carefully designed mechanisms to ensure practical robustness.
    \item We develop a non-asymptotic theory on generalization error with estimated reciprocal weights.
    \item We validate our approach through comprehensive experiments with extensive ablation studies.
\end{itemize}

\subsection{Related Works}

We focus our discussion on the conditional coverage of conformal prediction. In the literature, there is another branch of work concerning the \textit{training-conditional} guarantee~\cite{park2020PAC,bian2023training,duchi2025few}, which studies the coverage probability conditional on the specific calibration set observed. To clarify, we are specifically concerned with the coverage guarantee conditional on the covariate of the test sample, $X_{\text{test}}$.

Formally, exact conditional coverage refers to $\mathbb{P}(Y_{\text{test}} \in \mathcal{C}_\alpha(X_{\text{test}})\mid X_{\text{test}})$, which is the ideal quantity we seek to control. In contrast, relaxed versions target quantities such as $\mathbb{P}(Y \in \mathcal{C}_\alpha(X_{\text{test}})\mid H(X_{\text{test}}))$ for a certain function $H$.

Group-conditional coverage serves as a natural approximation to the exact conditional guarantee. Here, we specifically discuss the case where groups are formed by partitioning the covariate space. Classic conformal procedures operate with predefined groups~\cite{jung2023batch}, where $h$ is a fixed discrete partition function. Beyond fixed groups, \citet{kiyani2024conformal} propose co-training the partition function with quantile regression within each group, providing a discrete approximation of conditional coverage. Though straightforward, the precision of this discrete approximation is limited, and the convergence rate of the resulting MSCE scales no faster than $O(n^{-1/4})$. Adopting a different perspective that reformulates the conditional guarantee as moment equations, \citet{gibbs2025conformal} extend the approximation to infinite, overlapping groups. They leverage regularized quantile regression as well as full conformal prediction~\cite{vovk2005algorithmic}. However, \citet{gibbs2025conformal} provides control only on a substitute metric---coverage under covariate shift---rather than the exact conditional coverage. Moreover, full conformal procedures incur a significant computational burden, limiting their general applicability.

Beyond group-conditional guarantees, localized conformal prediction~\cite{bian2023training,hore2024local} provides another avenue to approximate the conditional guarantee by applying kernel weighting during empirical quantile computation. As analyzed in~\citet{hore2024local}, the guarantee provided by such localization corresponds to a random function $h$ that outputs a point sampled from a distribution centered at $X_{\text{test}}$, defined by the kernel. However, this strategy suffers inherently from the curse of dimensionality, particularly in the complex, high-dimensional scenarios that modern conformal prediction targets.

Complementing the focus on grouping and quantile computation, a substantial body of literature strives to refine the nonconformity score function. Improving upon the classic residual score, \citet{lei2018distribution} propose scaled residuals using a learned conditional standard deviation $\sigma(x)$. \citet{romano2019conformalized} propose Conformalized Quantile Regression (CQR), which directly conformalizes two learned quantiles rather than a mean prediction. They also highlight the difficulty of learning $\sigma(x)$, particularly for overfitted neural networks. More recently, \citet{xie2024boosted} propose a boosting procedure to iteratively refine the score function. However, in addition to high computational costs, this method further requires access to training data, which is often infeasible given the prevalence of pre-trained black-box models. Similarly, CQR necessitates replacing the traditional conditional mean regression objective with quantile regression on the training set, thereby limiting its general applicability. While the aforementioned scores are tailored for one-dimensional labels, another branch of work investigates density-based scores~\cite{izbicki2022cd,plassier2025probabilistic,braun2025multivariate}, which are readily extensible to multi-dimensional labels and also flexible enough to capture heteroscedasticity. However, estimating the conditional density $f(Y\mid X)$ is generally a harder task than regression on the conditional mean $\mathbb{E}[Y\mid X]$. Furthermore, the calibration set used for density estimation is typically much smaller than the training set. This sample size mismatch limits the efficacy of density estimation; therefore, we focus on the quantile regression of scores rather than density estimation.

Most relevant to our work is the approach of performing quantile regression directly on conformity scores~\cite{gibbs2025conformal,kiyani2024conformal,plassier2025rectifying}. As mentioned before, the excess risk of the pinball loss constitutes an upper bound on the MSCE. Specifically, \citet{plassier2025rectifying} propose a general framework called Rectified Conformal Prediction (RCP), which transforms the original nonconformity score into a rectified version that tries to remove the covariate-dependent component of the score and thereby naturally improves conditional coverage of standard split conformal prediction with these rectified scores. 
However, naive quantile regression using the pinball loss exhibits inherent limitations in conditional coverage~\cite{feldman2021improving}, and struggles even with marginal coverage in high-dimensional settings~\cite{gibbs2025correcting}.
In our context, we identify that naive quantile regression overlooks a critical heteroscedastic component. Our method aims to recover this missing component, yielding a significantly sharper approximation to the MSCE.

\section{Preliminaries}\label{sec:preliminaries}

\paragraph{Split Conformal Prediction.} 
We consider the standard setting where we observe i.i.d.\ data points $(X, Y) \in \mathcal{X} \times \mathcal{Y}$. 
In the split conformal prediction framework~\cite{papadopoulos2002inductive}, the available data are randomly partitioned into two disjoint subsets: a training set $\mathcal{D}_{\text{train}}$ and a calibration set $\mathcal{D}_{\text{cal}} = \{(X_i, Y_i)\}_{i=1}^n$ of size $n$. A predictive model is first fitted on $\mathcal{D}_{\text{train}}$. We then define a nonconformity score function $S: \mathcal{X} \times \mathcal{Y} \to \mathbb{R}$, which measures the discrepancy between the target $y$ and the model's prediction at $x$ (e.g., the absolute residual $S(x, y) = |y - \hat{\mu}(x)|$). The scores are computed for all calibration points as $S_i = S(X_i, Y_i)$ for all $i \in \{1, \dots, n\}$. Given a user-specified miscoverage level $\alpha \in (0, 1)$, we compute the conformal threshold $\hat{q}$ as the $\lceil (n+1)(1-\alpha) \rceil / n$-th empirical quantile of the calibration scores $\{s_1, \dots, s_n\}$. For a new test input $X_{n+1}$, the prediction set is constructed as $\mathcal{C}(X_{n+1}) = \{y \in \mathcal{Y} : S(X_{n+1}, y) \le \hat{q}\}$. 
This procedure satisfies:
\begin{equation}\label{eq:marginal_validity}
    1-\alpha \le \mathbb{P}(Y_{n+1} \in \mathcal{C}(X_{n+1})) \le 1-\alpha + \frac{1}{n+1}.
\end{equation}

\paragraph{Rectified Conformal Prediction (RCP).}
As has been widely noted, the failure pattern of standard split conformal prediction is undercoverage in hard regions and overcoverage in easy ones.
To improve the conditional coverage, \citet{plassier2025rectifying} propose RCP, which transforms the raw nonconformity scores into a rectified version that is approximately homoscedastic. While the RCP framework allows for general transformations, we focus on the fundamental additive correction in our work.

Specifically, let $\hat{q}_{1-\alpha}: \mathcal{X} \to \mathbb{R}$ be an estimator of the conditional $(1-\alpha)$-quantile of the raw score $S(X, Y)$ given $X$. The rectified score function $R: \mathcal{X} \times \mathcal{Y} \to \mathbb{R}$ is defined as the deviation from this estimated quantile:
\begin{equation}
    R(x,y) := S(x, y) - \hat{q}_{1-\alpha}(x).
\end{equation}
The conformal procedure is then applied to these rectified scores. We compute the rectified scores on the calibration set, $\mathcal{R}_{\text{cal}} = \{R(x_i,y_i)\}_{i=1}^n$, and find their $\lceil (n+1)(1-\alpha) \rceil / n$-th empirical quantile, denoted by $\hat{\gamma}$. The resulting prediction set for a new test point $X_{n+1}$ is constructed as:
\begin{equation}
    \mathcal{C}(X_{n+1}) = \{y \in \mathcal{Y} : S(X_{n+1}, y) \le \hat{q}_{1-\alpha}(X_{n+1}) + \hat{\gamma}\}.
\end{equation}
Intuitively, $\hat{q}_{1-\alpha}(x)$ serves as a coarse, instance-dependent baseline threshold, while $\hat{\gamma}$ acts as a global, residual correction to ensure exact marginal validity.

\paragraph{Conditional coverage.}
Given marginal validity, our primary concern is the \textit{conditional coverage}, defined as:
\begin{equation}
    \pi(x) \coloneqq \mathbb{P}(Y \in \mathcal{C}_\alpha(X) \mid X=x),
\end{equation}
which characterizes the coverage probability for each specific instance $x$. Rather than considering any relaxed form, we focus on controlling the exact conditional coverage.

Since $\pi(x)$ is a function of $x$, a natural approach is to construct a metric that summarizes its deviation from the target level $1-\alpha$. Following \citet{kiyani2024conformal}, we define the Mean Squared Coverage Error (MSCE) as:
\begin{equation}
    \operatorname{MSCE} \coloneqq \mathbb{E}[(\pi(X) - (1-\alpha))^2].
\end{equation}
We further motivate this metric through a novel lens. Viewing $\pi(X)$ as a random variable, the tower property implies that its expectation is the marginal coverage:
\begin{equation}
    \mathbb{E}[\pi(X)] = \mathbb{P}(Y \in \mathcal{C}_\alpha(X)),
\end{equation}
which is controlled by marginal validity as stated in Equation~\ref{eq:marginal_validity}. Given this, achieving conditional validity reduces to establishing concentration of $\pi(X)$, which naturally amounts to controlling its variance. Marginal validity ensures that $\mathbb{E}[\pi(X)]$ converges to the target level $1-\alpha$ at the fast rate $O(n^{-1})$; consequently, the bias term is negligible, and the variance is nearly equivalent to the MSCE.

\paragraph{Quantile regression.}
Our framework relies on estimating the conditional quantiles of the nonconformity scores. Formally, let $Z \in \mathbb{R}$ be a target random variable (typically the score $S$ in our context) and $X \in \mathcal{X}$ be the covariates. We denote the true conditional $\tau$-th quantile of $Z$ given $X=x$ as $q_\tau(x) \coloneqq \inf \{z : \mathbb{P}(Z \le z \mid X=x) \ge \tau\}$ for a level $\tau \in (0, 1)$. 
Standard quantile regression estimates $q_\tau$ by minimizing the expected pinball loss~\cite{koenker2005quantile}. The pinball loss function $\rho_\tau: \mathbb{R}\times \mathbb{R} \to \mathbb{R}_{\ge 0}$ is defined as:
\begin{equation}
    \rho_\tau(q, u) \coloneqq \max\{\tau (u-q), (\tau - 1)(u-q)\}.
\end{equation}
In practice, given a hypothesis class $\mathcal{G}$ (e.g., neural networks), we seek an estimator $\hat{g} \in \mathcal{G}$ that minimizes the empirical risk $\frac{1}{n}\sum_{i=1}^n \rho_\tau(z_i, g(x_i))$. A fundamental property of this objective is that, assuming sufficient model capacity, the population minimizer $g^\star \coloneqq \arg\min_{g} \mathbb{E}[\rho_\tau(Z, g(X))]$ uniquely recovers the true conditional quantile $q_\tau$.

\section{Approximation to MSCE}

In general, without a generative model, we typically observe only a single realization of $Y$ for each instance $X$. As a result, although the MSCE is a theoretically well-motivated metric, it is inherently difficult to evaluate—and hence to optimize—in practice. To address this challenge, \citet{kiyani2024conformal} derived an upper bound on the MSCE, while \citet{plassier2025rectifying} established a stronger pointwise bound on the deviation $|F_{S\mid X}(\hat q_{1-\alpha}(x)) - (1-\alpha)|$. Though there is a gap between $F_{S\mid X}(\hat q_\tau(x))$ and $\pi(x)$ in RCP due to the conformalization step after quantile regression, \citet{plassier2025rectifying} proves that it shrinks at an exponential rate with respect to the size of sample used for conformalization.

Hereafter, we will use $\tau\coloneqq1-\alpha$ to denote the target coverage level, and $\hat q_\tau(x)$ to denote the estimated conditional $\tau$-quantile of the nonconformity scores. We now introduce the pointwise results that motivate quantile regression with standard pinball loss~\cite{plassier2025rectifying}.
\begin{proposition}\label{prop:pointwise_bound_lipschitz}
    If the conditional CDF of scores is $L_F$-Lipschitz, then under mild regularity conditions,
    \begin{equation}
        |F_{S\mid X}(\hat q_\tau(x)) - \tau| \leq \sqrt{2L_F(\mathcal{L}_x(\hat q_\tau(x)) - \mathcal{L}_x(q_{\tau}(x)))}
    \end{equation}
    holds. Here, $\mathcal{L}_x(\cdot) = \mathbb{E}_Y[\rho_{\tau} (\cdot,s(x,Y))]$ denotes the expected pinball loss (with expectation taken over $Y\mid X$). 
\end{proposition}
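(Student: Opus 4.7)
The plan is to establish a Bregman-style lower bound on the excess pinball loss. Abbreviating $F(\cdot) = F_{S\mid X}(\cdot \mid x)$ and writing the pinball loss as $\rho_\tau(q,u) = \tau(u-q)_+ + (1-\tau)(q-u)_+$, I would first compute the derivative of $\mathcal{L}_x$. Using absolute continuity of $F$ (one of the ``mild regularity conditions'' that justifies differentiation under the integral), a direct calculation gives $\mathcal{L}_x'(q) = F(q) - \tau$. Since $F(q_\tau(x)) = \tau$ by definition, $\mathcal{L}_x'$ vanishes at the true quantile, and integrating produces the key identity
\begin{equation*}
\mathcal{L}_x(\hat q_\tau(x)) - \mathcal{L}_x(q_\tau(x)) = \int_{q_\tau(x)}^{\hat q_\tau(x)} \bigl(F(u) - \tau\bigr)\, du.
\end{equation*}
Without loss of generality I would assume $\hat q_\tau(x) \geq q_\tau(x)$ (the symmetric case is handled by reversing signs), so that the integrand is nonnegative by monotonicity of $F$.

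Second, I would apply the Lipschitz hypothesis in the less familiar direction: it bounds how slowly the excess loss can vanish. Setting $A = F(\hat q_\tau(x)) - \tau$ and using $F(\hat q_\tau(x)) - F(u) \leq L_F(\hat q_\tau(x) - u)$, one gets $F(u) - \tau \geq A - L_F(\hat q_\tau(x) - u)$; combined with $F(u) - \tau \geq 0$, this yields the envelope $F(u) - \tau \geq \bigl(A - L_F(\hat q_\tau(x) - u)\bigr)_+$. The cutoff $\hat q_\tau(x) - A/L_F$ lies to the right of $q_\tau(x)$ because a second application of Lipschitz gives $A \leq L_F(\hat q_\tau(x) - q_\tau(x))$. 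Integrating the triangular lower envelope over $[\hat q_\tau(x) - A/L_F, \hat q_\tau(x)]$ contributes an area of $A^2/(2L_F)$, so
\begin{equation*}
\mathcal{L}_x(\hat q_\tau(x)) - \mathcal{L}_x(q_\tau(x)) \geq \frac{\bigl(F(\hat q_\tau(x)) - \tau\bigr)^2}{2L_F},
\end{equation*}
and rearranging delivers the stated inequality.

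The argument is elementary calculus once the right viewpoint is adopted; the main conceptual obstacle is recognizing that Lipschitz continuity of $F$ (equivalently, boundedness of the conditional density $f_{S\mid X}$ by $L_F$) supplies a useful \emph{lower} bound on the excess loss. Intuitively, because $F - \tau$ can rise no faster than rate $L_F$ out of the true quantile, reaching a gap of $A$ at $\hat q_\tau(x)$ forces a minimal triangular area beneath the curve $F-\tau$. This is precisely the mechanism that puts a naive Lipschitz constant in the denominator, and it foreshadows why the refinement later in the paper replaces $L_F$ with the local density $f_{S\mid X}(q_\tau(x)\mid x)$ via a Taylor expansion.
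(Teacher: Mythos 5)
Your proof is correct and reaches the same $A^2/(2L_F)$ lower bound as the paper, starting from the same integral identity $\mathcal{L}_x(\hat q_\tau(x)) - \mathcal{L}_x(q_\tau(x)) = \int_{q_\tau(x)}^{\hat q_\tau(x)}(F(u)-\tau)\,du$ (the paper's Equation~\eqref{eq:excess_risk_integral}). The difference lies entirely in how the lower bound on that integral is extracted. The paper introduces the auxiliary function $H(q) = \int_{q^*}^q g(z)\,dz - \tfrac{1}{2L_F}g(q)^2$ with $g=F-\tau$ and verifies $H'(q)=g(q)\bigl(1-f(q)/L_F\bigr)\ge 0$, so $H(\hat q)\ge H(q^*)=0$, an algebraic monotonicity argument. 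You instead construct the explicit worst-case profile: the Lipschitz constraint forces $F(u)-\tau\ge\bigl(A-L_F(\hat q_\tau(x)-u)\bigr)_+$, and you compute the area of the resulting triangle directly, checking via a second Lipschitz application that the triangle's base sits inside the integration interval. The paper's route is slightly more compact and sidesteps that interval-containment check; yours makes the extremal geometry fully explicit, which is why it foreshadows so transparently how the Taylor-based refinement in Proposition~\ref{prop:our_taylor_approx} later replaces the global constant $L_F$ by the local density $f_{S\mid X}(q_\tau(x))$. Both are valid; yours is the more constructive presentation of the same underlying fact.
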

The proof is detailed in Appendix~\ref{app:proof_of_pointwise}. Next, we introduce the assumption on consistency of quantile estimators and our approximation for the MSCE based on Taylor expansion. 

\begin{assumption}\label{ass:consistency}
    We assume the quantile estimator $\hat q_\tau$ is $L_2$-consistent, i.e., $\epsilon_q (x) \coloneqq \hat q_\tau(x)-q_\tau(x)$ satisfies:
    \begin{equation}
        \left\| \epsilon_q\right\|_{L_2\left(\mathbb{P}_X\right)} \xrightarrow{p} 0 \quad \text { as } n \rightarrow \infty
    \end{equation}
\end{assumption}

\begin{remark}
Assumption~3.2 requires only $L_2$-consistency to identify the leading term in Proposition~\ref{prop:our_taylor_approx}.
In later sections, when analyzing the finite-sample performance, we establish a fast convergence rate for $\hat q_\tau$.
\end{remark}
We then define the Mean Squared Quantile Error (MSQE) as the first step to derive a tractable surrogate of MSCE.
\begin{equation}            \operatorname{MSQE}\left(\hat{q}_\tau\right):=\mathbb{E}_X\left[\left(F_{S\mid X}(\hat q_{\tau}(x))-\tau\right)^2\right].
\end{equation}
Next, to circumvent estimating the full conditional distribution $F_{S\mid X}$, we expand around $q_\tau$, which constitutes the key step toward tractability. This is achieved via Taylor expansions, as formalized in the following proposition.
\begin{proposition}\label{prop:our_taylor_approx} Let $G(u):=\left(F_{S \mid X}(u)-\tau\right)^2$, and let $\mathcal{E}(x) \coloneqq \mathcal{L}_x(\hat q_\tau(x)) - \mathcal{L}_x(q_\tau(x))$ denote the pointwise excess risk under pinball loss. Under standard regularity assumptions, the following two Taylor expansions hold:
\begin{equation}\label{eq:taylor_expansion_1} 
        G(\hat q_\tau(x)) 
        = f_{S|X}(q_\tau(x))^2\epsilon_q(x)^2         
        + \frac{1}{6} G^{\prime \prime \prime}\left(\xi_{S, 1}\right)\epsilon_q(x)^3,       
    \end{equation} 
and    \begin{equation}\label{eq:taylor_expansion_2}
        \mathcal{E}(x) = \frac{1}{2}f_{S|X}(q_\tau(x))\epsilon_q(x)^2 +     \frac{1}{6} f_{S \mid X}^{\prime}(\xi_{S, 2})\epsilon_q(x)^3,
\end{equation}
    where $\xi_{S,1}$ and $\xi_{S,2}$ both lie between $\hat q_\tau(x)$ and $q_\tau(x)$.
    
    Under Assumption~\ref{ass:consistency} and mild regularity conditions on the density $f_{S|X}$, the squared conditional quantile error admits the following expansion:
    \begin{equation}
        \big(F_{S\mid X}(\hat q_\tau(x)) - \tau\big)^2 = 2 f_{S|X}(q_\tau(x)) \mathcal{E}(x) + C_f\epsilon_q(x)^3,
    \end{equation}
    where $C_f$ is a constant that characterizes the smoothness of $f_{S\mid X}$. Consequently, the MSQE satisfies:    \begin{equation}\label{eq:msce_taylor_approx}        
        \operatorname{MSQE} = 2\mathbb{E}_X\big[f_{S|X}(q_\tau(X)) \mathcal{E}(X)\big] + C_f\mathbb{E}_X[\epsilon_q(X)^3], 
    \end{equation}    
    and the first term serves as our final optimization surrogate.
\end{proposition}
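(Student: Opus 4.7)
The plan is to perform two Taylor expansions centered at the true quantile $q_\tau(x)$, exploit the fact that the first-order terms vanish at this point (since $F_{S|X}(q_\tau(x))=\tau$), and then eliminate the unknown CDF by matching the resulting quadratic parts of $G$ and $\mathcal{E}$ against each other, so that the leading $\epsilon_q(x)^2$ contribution of $G(\hat q_\tau(x))$ coincides with $2f_{S|X}(q_\tau(x))$ times the leading term of $\mathcal{E}(x)$.

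First I would differentiate $G(u)=(F_{S|X}(u)-\tau)^2$. The chain rule gives $G'(u)=2(F_{S|X}(u)-\tau)f_{S|X}(u)$, $G''(u)=2f_{S|X}(u)^2+2(F_{S|X}(u)-\tau)f'_{S|X}(u)$, and $G'''(u)=6f_{S|X}(u)f'_{S|X}(u)+2(F_{S|X}(u)-\tau)f''_{S|X}(u)$. Plugging $u=q_\tau(x)$ into the first two immediately yields $G(q_\tau(x))=G'(q_\tau(x))=0$ and $G''(q_\tau(x))=2f_{S|X}(q_\tau(x))^2$, so Taylor's theorem with Lagrange remainder evaluated at $\hat q_\tau(x)=q_\tau(x)+\epsilon_q(x)$ produces Equation~\ref{eq:taylor_expansion_1} directly.

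Next I would differentiate $\mathcal{L}_x(q)$. Writing the pinball loss as $\mathcal{L}_x(q)=\tau\int_q^\infty(s-q)f_{S|X}(s)\,ds+(1-\tau)\int_{-\infty}^q(q-s)f_{S|X}(s)\,ds$ and applying Leibniz's rule yields $\mathcal{L}_x'(q)=F_{S|X}(q)-\tau$, whence $\mathcal{L}_x''(q)=f_{S|X}(q)$ and $\mathcal{L}_x'''(q)=f'_{S|X}(q)$. Since $\mathcal{L}_x'(q_\tau(x))=0$, the Taylor expansion of $\mathcal{E}(x)=\mathcal{L}_x(\hat q_\tau(x))-\mathcal{L}_x(q_\tau(x))$ produces Equation~\ref{eq:taylor_expansion_2} with leading coefficient $\tfrac{1}{2}f_{S|X}(q_\tau(x))$. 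Multiplying that leading term by $2f_{S|X}(q_\tau(x))$ reproduces the leading $f_{S|X}(q_\tau(x))^2\epsilon_q(x)^2$ contribution of $G(\hat q_\tau(x))$; subtracting therefore cancels the principal part and leaves a pure cubic remainder whose coefficient is a polynomial in $f_{S|X}$, $f'_{S|X}$, and $f''_{S|X}$ at the intermediate Lagrange points. Under a uniform sup-norm bound on these three quantities, that coefficient collapses into a single constant $C_f$, giving the pointwise identity, and taking $\mathbb{E}_X$ then yields Equation~\ref{eq:msce_taylor_approx}.

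The main obstacle is rigorously controlling the Lagrange remainders once integrated against $\mathbb{P}_X$. Assumption~\ref{ass:consistency} supplies only $L_2$-consistency, which controls $\mathbb{E}_X[\epsilon_q(X)^2]$ but does not automatically dominate $\mathbb{E}_X[\epsilon_q(X)^3]$; bridging the two requires either an auxiliary almost-sure or $L_\infty$ bound on $\epsilon_q$ (so that the cubic is at most $\|\epsilon_q\|_\infty$ times the quadratic), or a Hölder-type interpolation that exchanges $L_2$-consistency for a higher moment. In parallel, one must verify that the intermediate Lagrange points $\xi_{S,1},\xi_{S,2}$ lie in a neighborhood on which $f_{S|X}$ stays $C^2$-smooth uniformly in $x$. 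These two requirements together determine the precise form of $C_f$ and are the technical crux of the argument.
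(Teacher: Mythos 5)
Your proposal is correct and follows essentially the same route as the paper's own proof: differentiate $G$ and $\mathcal{L}_x$, use $F_{S|X}(q_\tau(x))=\tau$ to annihilate the first-order terms, Taylor-expand with Lagrange remainders, scale $\mathcal{E}(x)$ by $2f_{S|X}(q_\tau(x))$ so the quadratic parts cancel exactly, and absorb the cubic remainder into a single constant $C_f$ under uniform bounds on $f_{S|X}, f'_{S|X}, f''_{S|X}$. Two minor notes: you correctly obtain the coefficient $\frac{1}{2}f_{S|X}(q_\tau(x))$ in the expansion of $\mathcal{E}(x)$, which agrees with the paper's internal derivation and is what makes the cancellation close, whereas the proposition's displayed equation (\ref{eq:taylor_expansion_2}) drops the $\frac{1}{2}$ (apparently a typo in the statement); and the concern you raise about $L_2$-consistency not directly dominating $\mathbb{E}_X[\epsilon_q(X)^3]$ is legitimate, but it is also left informal in the paper's own proof of this proposition and is only addressed rigorously later through the H\"older-type norm-equivalence assumption in Section~\ref{sec:theory}.
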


The proof of Proposition~\ref{prop:our_taylor_approx} is provided in Appendix~\ref{proof:our_taylor_approx}. The key implication is that the standard pinball objective and the conditional-coverage objective penalize quantile errors under different local geometries. Writing $\epsilon_q(x):=\hat q_\tau(x)-q_\tau(x)$, Proposition~\ref{prop:our_taylor_approx} shows that the leading term of the squared coverage deviation scales as $f_{S\mid X}(q_\tau(x))^2\epsilon_q(x)^2$, whereas the leading term of the pointwise pinball excess risk scales as $f_{S\mid X}(q_\tau(x))\epsilon_q(x)^2$. Thus, plain pinball loss underweights regions where the conditional CDF is steep at the target quantile, precisely where a small quantile error can induce a large coverage error. Multiplying the pinball loss by the oracle density weight $f_{S\mid X}(q_\tau(x))$ aligns its leading quadratic term with the MSQE up to an irrelevant constant factor. To make this density weight more interpretable, we next consider the location-scale family as an example.

\begin{example}\label{example1}
    Consider the case where the nonconformity score $S$ given $X$ follows a location-scale family: 
    \begin{equation}
        S = \varpi(x) + \sigma(x) \xi,
    \end{equation}
    where $\xi$ is a standardized random variable with a base PDF $f_0(\cdot)$ (e.g., standard normal or Laplace) and CDF $F_0(\cdot)$. The conditional density of $S$ is given by 
    $f_{S|X}(s) = \frac{1}{\sigma(x)} f_0\left(\frac{s - \varpi(x)}{\sigma(x)}\right)$.
    Let $z_{\tau} = F_0^{-1}(\tau)$ be the $\tau$-quantile of the base distribution. The true conditional quantile of the score is then $q_\tau(x) = \varpi(x) + \sigma(x) z_{\tau}$. Evaluating the conditional density at the true quantile yields:
    \begin{equation}
        f_{S|X}(q_\tau(x)) = \frac{1}{\sigma(x)} f_0(z_{\tau})\propto \frac{1}{\sigma(x)}.
    \end{equation}    
\end{example}
This example shows that minimizing the standard pinball loss collapses the spectrum of heteroscedasticity inherent in the MSQE objective into a single, unweighted error criterion.
Motivated by the need to recover this heteroscedastic spectrum, we refer to our approach as colorful pinball, in contrast to the conventional plain pinball loss.

The severity of this objective misalignment depends on the degree of heteroscedasticity in the scores (and intrinsically, in $Y \mid X$). 
We note that standard quantile regression already captures part of the heteroscedasticity, yet our approximation reveals that \textbf{additional emphasis on heteroscedasticity} is needed when targeting conditional coverage. To see this, recall Equation~\eqref{eq:taylor_expansion_1} and~\eqref{eq:taylor_expansion_2}.
Therefore, although minimizing either the standard or density-weighted pinball loss recovers the true quantile asymptotically, a severe misalignment persists with finite samples, which is precisely the misalignment our weighting scheme is designed to correct.

\begin{remark}
    In Example \ref{example1}, the weight $1/\sigma(x)$ may appear counterintuitive as it assigns lower importance to high-variance regions. However, we note that this weight is put on the distance of two quantiles, and a small $\sigma(x)$ corresponds to a steeper $F_{S\mid X}$, making the coverage probability highly sensitive to estimation errors; a slight deviation in $\hat q_\tau(x)$ can drastically degrade coverage (e.g., from 95\% to 80\%). Thus, normalizing by $\sigma(x)$ effectively standardizes this sensitivity across instances, promoting a stable coverage independent of $x$, i.e., $\pi(x) \approx \text{const}$.
\end{remark}

\section{Colorful Pinball Conformal Prediction}

\subsection{Algorithm Details}

Figure~\ref{fig:algo_illustration} illustrates the proposed workflow. We begin by partitioning the calibration set $\mathcal{D}_{\text{cal}}$ into three disjoint subsets: $\mathcal{D}_{\text{cal},1}, \mathcal{D}_{\text{cal},2}, \text{and } \mathcal{D}_{\text{cal},3}$. Our objective targets the leading term in Equation~(\ref{eq:msce_taylor_approx}), which can be reformulated as:
\begin{equation}
    \begin{aligned}        &\phantom{=}\mathbb{E}_X[f_{S \mid X}(q_\tau(X)) \mathcal{E}(X)]
        \\
        &= \mathbb{E}_{X,Y}\left[f_{S \mid X}(q_\tau(X)) \rho_{\tau}(\hat q_\tau(X), S) \right] + \mathrm{const}
    \end{aligned}    
\end{equation}
To optimize this empirically, we require an estimate of the weight $w(x)\coloneqq f_{S \mid X}(q_\tau(X))$. Instead of estimating the full conditional density $f_{S\mid X}$, we note that the quantile function is the inverse function of the CDF. Thus, we have:
\begin{equation}
    \frac{\partial q_\tau(x)}{\partial \tau} = \frac{1}{f_{S \mid X}\left(q_\tau(x)\right)}.
\end{equation}
Consequently, we approximate the density weight using the finite-difference estimator:
\begin{equation}\label{eq:weight_estimator_def}
    \hat w(x) = \frac{2\delta}{\hat q_{\tau+\delta}(x) - \hat q_{\tau-\delta}(x)}.
\end{equation}

This offers a clear benefit: it requires estimating only two auxiliary quantiles, $q_{\tau+\delta}(x)$ and $q_{\tau-\delta}(x)$, alongside the primary target $q_{\tau}(x)$. This structure naturally motivates a multitask learning framework. We employ a shared feature extractor $h(x)$ coupled with three distinct output heads:
\begin{equation}
    \begin{gathered}
        \hat q_\tau(x) = \phi_{\text{main}}\circ h (x) \\ 
        \hat q_{\tau+\delta}(x) = \hat q_{\tau}(x) + \operatorname{Softplus}(\phi_{\text{high}}\circ h(x)) \\
        \hat q_{\tau-\delta}(x) = \hat q_{\tau}(x) - \operatorname{Softplus}(\phi_{\text{low}}\circ h(x)).
    \end{gathered}    
\end{equation}
Here, the $\operatorname{Softplus}(\cdot) = \log (1+\exp(\cdot))$ activation is employed to ensure monotonicity, preventing quantile crossing that would otherwise yield invalid negative weights $w_i$. In practice, $h$ and the projection heads $\{ \phi_{\cdot} \}$ are parameterized by neural networks (e.g., MLPs).

We observe that the approximation error derived from the Taylor expansion in Equation~(\ref{eq:msce_taylor_approx}) is contingent on the accuracy of the quantile estimates. Therefore, we first perform joint training of all three estimators on $\mathcal{D}_{\text{cal},1}$ using the standard pinball loss. With this initialization, we freeze the backbone $h$ and fine-tune the primary head $\phi_{\text{main}}$ on $\mathcal{D}_{\text{cal},2}$:
\begin{equation}
    \begin{gathered}
        \phi_{\text{main}} = \argmin_\phi \sum_{i\in\mathcal{D}_{\text{cal},2}} \hat w(x_i) \rho_{\tau}(\phi\circ h(x_i), s_i).        
    \end{gathered}    
\end{equation}

The resulting quantile estimator is the final version: $\hat q_\tau =\phi_{\text{main}} \circ h$. To ensure marginal validity, we apply RCP by computing the residuals on $\mathcal{D}_{\text{cal},3}$:
\begin{equation}
    R_j = S_j-\hat q_\tau(x_j).
\end{equation}
We then compute $\hat\gamma$ as the $\lceil (|\mathcal{D}_{\text{cal},3}|+1)(\tau) \rceil$-th smallest value of $\{R_j\}_{j\in \mathcal{D}_{\text{cal},3}}$ and construct the conformal set:
\begin{equation}
    \mathcal{C}_\alpha(x_{\text{test}}) = \{y: S(x_{\text{test}},y) \leq \hat\gamma + \hat q_\tau(x_{\text{test}})\}.
\end{equation}
For instance, the classic absolute residual score $S(x,y) = |y-\hat\mu(x)|$ for scalar regression yields:
\begin{equation}
    \mathcal{C}_\alpha(x_{\text{test}}) = [\hat{\mu}(x_{\text{test}}) \pm (\hat q_\tau(x_{\text{test}}) + \hat\gamma)].
\end{equation}
The complete procedure is summarized in Algorithm~\ref{alg:colorful_pinball_main}.

\begin{algorithm}[tb]
  \caption{Colorful Pinball Conformal Prediction}
  \label{alg:colorful_pinball_main}
  \begin{algorithmic}[1]
    \STATE {\bfseries Input:} Calibration data $\mathcal{D}_{\text{cal}}$, Point predictor $\hat\mu$, Test input $x_{\text{test}}$, Target coverage level $\tau$, Bandwidth $\delta$
    
    \STATE Split $\mathcal{D}_{\text{cal}}$ into three disjoint parts $\mathcal{D}_{\text{cal},1}$, $\mathcal{D}_{\text{cal},2}$, $\mathcal{D}_{\text{cal},3}$.
  
    \STATE Compute conformity scores $S_i$ for all $(x_i, y_i) \in \mathcal{D}_{\text{cal}}$.
    
    \STATE Train three quantile estimators $\hat{q}_{\text{low}}$, $\hat{q}_{\text{main}}$, and $\hat{q}_{\text{high}}$ (for levels $\tau-\delta, \tau, \tau+\delta$) jointly on $\mathcal{D}_{\text{cal},1}$.    
    \STATE Compute the weight for all $(x_i, y_i) \in \mathcal{D}_{\text{cal},2}$: 
    $$
        w_i \leftarrow \frac{2\delta}{\hat{q}_{\text{high}}(x_i) - \hat{q}_{\text{low}}(x_i)}.
    $$
   
    \STATE Fine-tune $\hat{q}_{\text{main}}$ on $\mathcal{D}_{\text{cal},2}$ to get the final quantile estimator $\hat q_\tau$ (freezing backbone and auxiliary heads).

    \STATE Compute residuals for all $(x_j,y_j) \in \mathcal{D}_{\text{cal},3}$:
    $$
        R_j \leftarrow S_j - \hat q_\tau(x_j)
    $$
    \STATE Compute empirical quantile $\hat\gamma$ as the $\lceil (|\mathcal{D}_{\text{cal},3}|+1)\tau \rceil$-th smallest value of $\{R_j\}$.
    
    \STATE {\bfseries Output:} $\mathcal{C}_\alpha(x_{\text{test}}) = \{y: S(x_{\text{test}},y) \leq \hat\gamma + \hat q_\tau(x_{\text{test}})\}$
  \end{algorithmic}
\end{algorithm}

\subsection{Towards Better Finite-Sample Stability}

We now introduce supplementary mechanisms to enhance the empirical stability of our methodology. As defined in Equation~(\ref{eq:weight_estimator_def}), the proposed estimator relies on inverse weighting, where the denominator corresponds to the estimated inter-quantile gap $\hat q_{\tau+\delta}(x_i) - \hat q_{\tau-\delta}(x_i)$. A vanishing denominator—often caused by estimation errors—can yield arbitrarily large weights $w_i$, leading to explosive variance in the optimization objective. Theoretically, ensuring stability requires the sample size $n$ to be sufficiently large for the neural network to reliably distinguish the $(\tau-\delta)$ and $(\tau+\delta)$ quantiles (e.g., 89\% vs. 91\%), thereby keeping the estimated gap bounded away from zero.

To stabilize the fine-tuning procedure, particularly in regimes where the bandwidth $\delta$ is small relative to $n$, we propose two strategies: \textit{weight clipping} and \textit{loss mixing}.
\begin{itemize}
    \item \textbf{Weight Clipping:} We truncate excessive weights to a threshold. This threshold can be set as a multiple $M$ of the empirical mean of the weights.
    \item \textbf{Loss Mixing:} We modify the optimization objective as a convex combination of the (self-normalized) weighted pinball loss and the standard pinball loss.
\end{itemize}

In essence, these two strategies correspond to imposing artificial upper and lower bounds on the estimated weights, a modification motivated by the theoretical bottlenecks identified in the proof of our main theorem (Theorem~\ref{thm:main_theorem}).
Furthermore, such clipping strategies are widely recognized as effective in settings involving reciprocal weighting; they significantly reduce variance at the cost of introducing slight bias, a trade-off commonly utilized in policy evaluation (e.g., clipping inverse propensity weights; \citealp{swaminathan2015batch}). We demonstrate the empirical benefits of these strategies in Section~\ref{sec:experiments} and Appendix~\ref{app:detailed_results}.


\section{Theoretical Analysis}\label{sec:theory}

To approximate MSCE, i.e., the MSE of coverage probability $\pi(x)$, we employ a two-step transformation. First, we substitute the target $\pi(x)$ with $F_{S\mid X}(\hat q_\tau(x))$; second, we select the leading term derived in Proposition~\ref{prop:our_taylor_approx} as our final optimization surrogate. 
According to Algorithm~\ref{alg:colorful_pinball_main}, the first substitution introduces a discrepancy due to the global conformal shift $\hat\gamma$ required for marginal coverage. However, as shown by \citet{plassier2025rectifying}, this gap decays exponentially with the calibration sample size $m$\footnote{We denote $|\mathcal{D}_{\text{calib},3}|=m$ and $|\mathcal{D}_{\text{calib},1}|=|\mathcal{D}_{\text{calib},2}|=n$.}, provided $\hat q_\tau$ satisfies certain regularity conditions that control the quality of the quantile estimator $\hat q_\tau$ (see Appendix~\ref{app:gap_results}).
Intuitively, with a moderately large sample size of $\mathcal{D}_{\text{cal}}$ (e.g., in the hundreds), we can focus on characterizing the finite-sample performance of our method by analyzing the MSE of $F_{S \mid X}\left(\hat q_\tau(x)\right)$, and, consequently, the expected risk of the density-weighted pinball surrogate.

Our theoretical goal is therefore to establish a finite-sample generalization guarantee for this density-weighted pinball objective. This is an appropriate target for two reasons. First, Proposition~\ref{prop:our_taylor_approx} shows that the excess density-weighted pinball risk is the leading tractable surrogate for the MSQE, up to higher-order Taylor remainders. Thus, controlling this excess risk translates into controlling the squared deviation $F_{S\mid X}(\hat q_\tau(X))-\tau$ in expectation. Second, the remaining gap between this quantity and the final conditional coverage $\pi(X)$ is introduced only by the conformalization shift, whose effect is exponentially small in the size of the final calibration split under the regularity conditions discussed in Appendix~\ref{app:gap_results}. Consequently, a finite-sample excess-risk bound for the weighted objective provides a direct route from the empirical optimization problem solved by CPCP to the desired control of conditional-coverage error.

We now present our main technical assumptions as follows.


\begin{assumption}
\label{ass:regularity}
Let $\mathcal X \subset \mathbb R^d$ be compact. We assume:
\begin{enumerate}
    \item \textbf{Quantile smoothness in $\tau$.}
    The conditional quantile function $q_\tau(x)$ is three times continuously
    differentiable with respect to $\tau$ for all $x\in\mathcal X$, and
    \[
    \sup_{x\in\mathcal X}
    \left|
    \frac{\partial^3 q_\tau(x)}{\partial \tau^3}
    \right|
    \le
    B_q'''.
    \]

    \item \textbf{Density regularity at the target quantile.}
    There exist constants $b_w, B_w$ such that:    
    $$
    0<b_w
    \le
    f_{S|X}(q_\tau(x)\mid x)
    \le
    B_w<\infty,
    \quad \forall x\in\mathcal X,
    $$

    \item \textbf{Local Hölder-type norm equivalence.} 
    There exist constants $C_{\mathrm{norm}}>0$, $\nu\in[1/3,1]$, and $r>0$
    such that for all $g\in\mathcal G$ satisfying
    $\|g-g^\star\|_{L_2(\mathbb P_X)}\le r$,
    \[
    \|g-g^\star\|_{L_\infty(\mathcal X)}
    \le
    C_{\mathrm{norm}}
    \|g-g^\star\|_{L_2(\mathbb P_X)}^\nu.
    \]
\end{enumerate}
\end{assumption}
Among these assumptions, the third is less conventional and warrants a more detailed discussion due to its pivotal role in our proof. The primary challenge in bounding the generalization error of our method stems from the estimated reciprocal weights: specifically, we require a pointwise lower bound on the denominator, whereas typical guarantees yield only $L_2$ convergence. To bridge this gap, we introduce this norm-equivalence assumption, which we require it to hold only locally within a neighborhood of the true quantile function $g^\star$. In general, it suffices that $g - g^\star$ belongs to a H\"older class for this assumption to hold. We provide a proof of this claim in Appendix~\ref{app:proof_of_norm_equivalence}, which can also be obtained as a special case of the celebrated Gagliardo--Nirenberg interpolation inequality. Finally, we note that the exponent $\nu$ increases with the smoothness of the function class; for instance, $\nu = 1$ for parametric models or regression in a Reproducing Kernel Hilbert Space with a smooth kernel. 

We then introduce the necessary definitions for presenting the main theorem. Let $\mathcal{G}$ be a hypothesis class with empirical Rademacher complexity $\mathfrak{R}_n(\mathcal{G})$. Let $\mathcal{R}(g) = \mathbb{E}_X[w(X)\rho_\tau(g(X),S)]$ be the expected risk of weighted pinball loss. Let the weights be estimated by $\hat{w}(x) = 2\delta / (\hat{q}_{\tau+\delta}(x) - \hat{q}_{\tau-\delta}(x))$ where the two $\hat{q}_\beta \in \mathcal{G}$ are the auxiliary quantile estimators derived through empirical risk minimization (ERM) with standard pinball loss. Let $\mathcal{E}_q(n):=\sup _{\beta \in\{\tau-\delta, \tau+\delta\}}\left\|\hat{q}_\beta-q_\beta\right\|_{L_2\left(\mathbb{P}_X\right)}$ denote the estimation error of the two auxiliary quantiles. Finally, let $M_{\rho, \mathcal{G}} \coloneqq \sup_{g \in \mathcal{G}} (\mathbb{E}_n[\rho_\tau(g(X), S)^2])^{1/2}$, and let $\sigma_S$ denote the bound on conditional sub-Gaussian scale of scores.

\begin{theorem}
\label{thm:main_theorem}
Under Assumption \ref{ass:regularity}, with probability at least $1-3\zeta$, setting the finite-difference bandwidth to $\delta^\star \asymp (\mathfrak{R}_n(\mathcal{G}))^{1/3}$, with appropriately large $n$, we have:
\begin{equation}
    \begin{aligned}
        \mathcal{R}(\hat{g}) - \mathcal{R}(g^\star) &= O\left( \mathfrak{R}_n(\mathcal{G})^{\frac{2}{3}} \right),            
    \end{aligned}    
\end{equation}
which is of $O(n^{-1/3})$ when a fast rate is established via local Rademacher complexity.

Specifically, the exact non-asymptotic bound is given by:
\begin{equation}
\begin{aligned}
        \mathcal{R}(\hat{g})&-\mathcal{R}\left(g^{\star}\right)        
        \leq        
        C_1 \mathfrak{R}_n(\mathcal{G})+C_2 \sqrt{\frac{\log (1 / \zeta)}{n}}  
        \\
        &+
        C_3M_{\rho, \mathcal{G}}
        \left(
        \mathcal{E}_q(n)^2
        +
        C_4 \mathcal{E}_q(n)^{1+\nu}
        \sqrt{\frac{\log(1/\zeta)}{n}}
        \right.
        \\
        &+
        \left.
        C_5 
        \frac{ \mathcal{E}_q(n)^{2\nu} \log(1/\zeta)}{n}
        \right)^{1/3},        
\end{aligned}
\end{equation}
where constants $C_1=4 B_w L_\rho$, $C_2=2\sqrt{2}B_w L_\rho \sigma_S$, $C_3=4\times3^{2/3}B_w^2 (B_q''')^{1/3}$, $C_4 = 2^{\nu-1/2}C_{\mathrm{norm}}$, $C_5 = C_{\mathrm{norm}}^2/6$, $L_\rho = \max\{\tau,1-\tau\}$.
\end{theorem}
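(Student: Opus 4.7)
The plan is to decompose the excess risk via a generalization-plus-perturbation argument. Writing $\hat{\mathcal{R}}_w(g) := \frac{1}{n}\sum_i w(x_i)\rho_\tau(g(x_i),s_i)$ for the empirical risk with \emph{oracle} weights, and using that $\hat g$ minimizes $\hat{\mathcal{R}}_{\hat w}$ over $\mathcal{G}$, I telescope
\begin{equation*}
    \mathcal{R}(\hat g) - \mathcal{R}(g^\star)
    \;\le\;
    2\sup_{g\in\mathcal{G}}\bigl|\mathcal{R}(g)-\hat{\mathcal{R}}_w(g)\bigr|
    \;+\;
    2\sup_{g\in\mathcal{G}}\bigl|\hat{\mathcal{R}}_w(g)-\hat{\mathcal{R}}_{\hat w}(g)\bigr|.
\end{equation*}
The first supremum is a standard uniform-concentration problem on the weighted loss class $\{(x,s)\mapsto w(x)\rho_\tau(g(x),s):g\in\mathcal{G}\}$: since $w\le B_w$ by Assumption~\ref{ass:regularity}.2 and $\rho_\tau$ is $L_\rho$-Lipschitz, Ledoux--Talagrand contraction yields Rademacher complexity bounded by $2B_w L_\rho \mathfrak{R}_n(\mathcal{G})$, and a sub-Gaussian Bernstein step (using $\sigma_S$) produces the $C_1\mathfrak{R}_n(\mathcal{G})+C_2\sqrt{\log(1/\zeta)/n}$ part of the bound.

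The crux is the weight-perturbation supremum. By Cauchy--Schwarz it is bounded by $M_{\rho,\mathcal{G}}\,\|\hat w-w\|_n$, so it suffices to control $\|\hat w - w\|_{L_2(\mathbb{P}_X)}$ and transfer to the empirical norm via concentration. I split $\hat w - w = (w_\delta - w) + (\hat w - w_\delta)$ through the population finite difference $w_\delta(x):=2\delta/(q_{\tau+\delta}(x)-q_{\tau-\delta}(x))$. A third-order Taylor expansion of $\tau \mapsto q_\tau(x)$ combined with Assumption~\ref{ass:regularity}.1 gives the deterministic bias $|w_\delta-w|\lesssim B_q''' B_w^2 \delta^2$. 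For the stochastic part, algebra yields
\begin{equation*}
    w_\delta(x)-\hat w(x)
    =
    \frac{2\delta\,\Delta(x)}
    {\bigl(q_{\tau+\delta}(x)-q_{\tau-\delta}(x)\bigr)\bigl(\hat q_{\tau+\delta}(x)-\hat q_{\tau-\delta}(x)\bigr)},
\end{equation*}
where $\Delta(x):=(\hat q_{\tau+\delta}-q_{\tau+\delta})(x)-(\hat q_{\tau-\delta}-q_{\tau-\delta})(x)$, whose $L_2$ norm is $O(\mathcal{E}_q(n))$ by the auxiliary-quantile guarantees.

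The hard step is the uniform-in-$x$ lower bound on the \emph{estimated} denominator $\hat q_{\tau+\delta}(x)-\hat q_{\tau-\delta}(x)$: $L_2$ consistency of $\hat q_\beta$ alone is insufficient, since pointwise collapse could send $\hat w$ to infinity on a measure-zero set with catastrophic empirical impact. This is precisely the role of the local norm-equivalence in Assumption~\ref{ass:regularity}.3, which upgrades $L_2$ errors to an $L_\infty$ bound $\|\hat q_\beta-q_\beta\|_\infty \le C_{\mathrm{norm}}\mathcal{E}_q(n)^\nu$. Together with the population estimate $q_{\tau+\delta}-q_{\tau-\delta}\ge 2\delta/B_w$, this shows the denominator is bounded below by a constant multiple of $\delta/B_w$ once $\delta$ dominates $\mathcal{E}_q(n)^\nu$, turning the ratio expression into a controlled linear functional of $\Delta$. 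A Bernstein step that transfers the $L_2$ control of the resulting random variable to the empirical norm $\|\cdot\|_n$ then produces the three-term expression $\mathcal{E}_q^2 + C_4 \mathcal{E}_q^{1+\nu}\sqrt{\log(1/\zeta)/n} + C_5 \mathcal{E}_q^{2\nu}\log(1/\zeta)/n$ inside the cube root. Finally, I balance bandwidth: $\|\hat w-w\|_{L_2}^2$ scales like $\delta^4 + \mathcal{E}_q(n)^2/\delta^2$, so minimizing over $\delta$ gives $\delta^\star\asymp\mathcal{E}_q(n)^{1/3}\asymp\mathfrak{R}_n(\mathcal{G})^{1/3}$ (using the fast local-Rademacher rate $\mathcal{E}_q(n)\asymp \mathfrak{R}_n(\mathcal{G})$ for pinball regression under the density lower bound $b_w$). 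This delivers the $C_3(\cdot)^{1/3}$ block and, combined with the generalization contribution, the overall $O(\mathfrak{R}_n(\mathcal{G})^{2/3})$ rate.
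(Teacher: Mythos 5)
Your proposal follows essentially the same route as the paper's proof: the same decomposition into a generalization term (Term~I) and a weight-perturbation term (Term~II); the same contraction-plus-Bernstein treatment of Term~I using $w\le B_w$, the $L_\rho$-Lipschitz pinball loss, and the conditional sub-Gaussian scale $\sigma_S$; the same Cauchy--Schwarz reduction of Term~II to $M_{\rho,\mathcal G}\|\hat w-w\|_{2,n}$; the same bias/variance split through the population finite difference $w_\delta$, with the third-order Taylor expansion controlling the bias; the same invocation of Assumption~\ref{ass:regularity}.3 to lower-bound the estimated denominator uniformly; the same Bernstein transfer from population $L_2$ to empirical norm yielding the three-term quantity under the cube root; and the same $\delta^\star\asymp\mathcal E_q^{1/3}\asymp\mathfrak R_n(\mathcal G)^{1/3}$ bandwidth optimization giving $O(\mathfrak R_n(\mathcal G)^{2/3})$.

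One place where you are actually slightly more careful than the paper: you correctly note that the uniform lower bound on the estimated denominator requires $\delta$ to dominate the $L_\infty$-error $C_{\mathrm{norm}}\mathcal E_q(n)^\nu$, because $|\hat\Delta_\delta-\Delta_\delta|\le\frac{1}{2\delta}(\|\hat q_{\tau+\delta}-q_{\tau+\delta}\|_\infty+\|\hat q_{\tau-\delta}-q_{\tau-\delta}\|_\infty)$. The paper omits the $\frac{1}{2\delta}$ factor at this step and consequently writes the sufficiency condition without $\delta$. With the chosen $\delta^\star\asymp\mathfrak R_n^{1/3}$ this constraint effectively requires $\nu\ge 1/3$ to be satisfiable for large $n$; the paper folds this into the phrase ``with appropriately large $n$,'' but your formulation makes the dependence explicit. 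This is a detail-level correction, not a different proof strategy.
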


Compared to direct quantile regression with the pinball loss, our method exhibits a slower asymptotic rate due to the additional error introduced by estimating density-based weights. However, we emphasize that the primary value of conformal prediction lies in the finite-sample regime, which motivates the exact non-asymptotic expression in our theory. In finite samples, although direct quantile regression may converge faster, the misalignment of its optimization objective with conditional coverage can substantially limit its performance, as illustrated in Proposition~\ref{prop:our_taylor_approx} and Example~\ref{example1}. Moreover, when higher-order central finite-difference schemes of order $k$ are employed~\cite{fornberg1988generation}, the resulting excess risk rate can be further improved to $O(\mathfrak{R}_n(\mathcal{G})^{2k/(2k+1)})$.

\begin{table*}[!t]
\centering
\captionsetup{font=small, skip=3pt}
\setlength{\tabcolsep}{3.2pt}
\renewcommand{\arraystretch}{0.92}

\begin{minipage}{\textwidth}
\caption{Comparison of MSCE. Mean Squared Coverage Error (MSCE, $\downarrow$) on 8 high-dimensional benchmarks. Best results are \textbf{bolded}.}
\label{tab:msce_main}
\centering
\begin{small}
\resizebox{\textwidth}{!}{
\begin{tabular}{lcccccccc}
\toprule
Method & Bike & Diamond & Gas Turbine & Naval & SGEMM & Superconductivity & Transcoding & WEC \\
\midrule
Split & 0.0031 ± 0.0008 & 0.0118 ± 0.0035 & 0.0033 ± 0.0006 & 0.0351 ± 0.0064 & 0.0039 ± 0.0010 & 0.0082 ± 0.0010 & 0.0125 ± 0.0031 & 0.0123 ± 0.0019 \\
PLCP & 0.0021 ± 0.0011 & 0.0032 ± 0.0033 & 0.0008 ± 0.0009 & 0.0162 ± 0.0087 & 0.0026 ± 0.0013 & 0.0017 ± 0.0008 & 0.0027 ± 0.0038 & 0.0031 ± 0.0033 \\
Gaussian-Scoring & 0.0011 ± 0.0004 & 0.0006 ± 0.0003 & \textbf{0.0004 ± 0.0002} & 0.0129 ± 0.0095 & 0.0041 ± 0.0041 & 0.0026 ± 0.0011 & 0.0016 ± 0.0014 & 0.0057 ± 0.0026 \\
CQR & 0.0011 ± 0.0007 & 0.0010 ± 0.0004 & 0.0006 ± 0.0002 & 0.0120 ± 0.0064 & 0.0012 ± 0.0005 & 0.0009 ± 0.0005 & 0.0016 ± 0.0009 & 0.0061 ± 0.0007 \\
CQR-ALD & \textbf{0.0008 ± 0.0003} & 0.0007 ± 0.0007 & 0.0006 ± 0.0003 & 0.0055 ± 0.0059 & 0.0009 ± 0.0006 & 0.0015 ± 0.0010 & 0.0021 ± 0.0029 & 0.0045 ± 0.0009 \\
RCP & 0.0010 ± 0.0005 & 0.0013 ± 0.0006 & 0.0006 ± 0.0004 & 0.0029 ± 0.0020 & 0.0007 ± 0.0003 & 0.0013 ± 0.0011 & 0.0009 ± 0.0003 & 0.0030 ± 0.0005 \\
RCP-ALD & 0.0013 ± 0.0010 & 0.0010 ± 0.0006 & 0.0005 ± 0.0003 & 0.0080 ± 0.0102 & 0.0009 ± 0.0007 & 0.0011 ± 0.0005 & 0.0010 ± 0.0007 & 0.0025 ± 0.0008 \\
CPCP & 0.0009 ± 0.0004 & 0.0009 ± 0.0004 & \textbf{0.0004 ± 0.0002} & \textbf{0.0019 ± 0.0010} & \textbf{0.0003 ± 0.0002} & 0.0014 ± 0.0011 & 0.0009 ± 0.0006 & 0.0025 ± 0.0009 \\
CPCP (Clip+Mix) & \textbf{0.0008 ± 0.0002} & \textbf{0.0004 ± 0.0002} & \textbf{0.0004 ± 0.0002} & \textbf{0.0019 ± 0.0009} & \textbf{0.0003 ± 0.0001} & \textbf{0.0007 ± 0.0004} & \textbf{0.0004 ± 0.0002} & \textbf{0.0012 ± 0.0004} \\
\bottomrule
\end{tabular}
}
\end{small}
\end{minipage}

\vspace{0.65em}

\begin{minipage}{\textwidth}
\caption{Comparison of WSC. Worst Slice Coverage (WSC, $\uparrow$) on 8 high-dimensional benchmarks. Best results are \textbf{bolded}.}
\label{tab:wsc_main}
\centering
\begin{small}
\resizebox{\textwidth}{!}{
\begin{tabular}{lcccccccc}
\toprule
Method & Bike & Diamond & Gas Turbine & Naval & SGEMM & Superconductivity & Transcoding & WEC \\
\midrule
Split & 0.8133 ± 0.0277 & 0.6480 ± 0.0206 & 0.8192 ± 0.0159 & 0.5428 ± 0.0446 & 0.7435 ± 0.0156 & 0.7712 ± 0.0266 & 0.6797 ± 0.0234 & 0.7623 ± 0.0195 \\
PLCP & 0.8559 ± 0.0341 & 0.8068 ± 0.0680 & 0.8690 ± 0.0229 & 0.6837 ± 0.0623 & 0.7891 ± 0.0514 & 0.8529 ± 0.0268 & 0.8123 ± 0.0645 & 0.8413 ± 0.0369 \\
Gaussian-Scoring & 0.8611 ± 0.0278 & 0.8613 ± 0.0170 & 0.8804 ± 0.0166 & 0.6928 ± 0.0911 & 0.7943 ± 0.0646 & 0.8434 ± 0.0232 & 0.8429 ± 0.0355 & 0.8259 ± 0.0347 \\
CQR & 0.8641 ± 0.0245 & 0.8563 ± 0.0159 & 0.8801 ± 0.0217 & 0.6997 ± 0.0837 & 0.8393 ± 0.0201 & 0.8704 ± 0.0235 & 0.8175 ± 0.0253 & 0.8149 ± 0.0188 \\
CQR-ALD & 0.8696 ± 0.0246 & 0.8735 ± 0.0259 & 0.8807 ± 0.0156 & 0.7836 ± 0.0745 & 0.8513 ± 0.0364 & 0.8622 ± 0.0246 & 0.8363 ± 0.0227 & 0.8222 ± 0.0193 \\
RCP & 0.8849 ± 0.0240 & 0.8448 ± 0.0237 & 0.8752 ± 0.0158 & 0.8002 ± 0.0457 & 0.8627 ± 0.0149 & 0.8638 ± 0.0260 & 0.8515 ± 0.0196 & 0.8516 ± 0.0198 \\
RCP-ALD & 0.8743 ± 0.0250 & 0.8558 ± 0.0178 & 0.8872 ± 0.0174 & 0.7596 ± 0.0803 & 0.8597 ± 0.0331 & \textbf{0.8779 ± 0.0181} & 0.8561 ± 0.0226 & 0.8582 ± 0.0173 \\
CPCP & 0.8820 ± 0.0132 & 0.8589 ± 0.0216 & 0.8844 ± 0.0125 & 0.8169 ± 0.0408 & 0.8864 ± 0.0119 & 0.8733 ± 0.0251 & 0.8548 ± 0.0172 & 0.8462 ± 0.0266 \\
CPCP (Clip+Mix) & \textbf{0.8882 ± 0.0250} & \textbf{0.8802 ± 0.0099} & \textbf{0.8912 ± 0.0139} & \textbf{0.8320 ± 0.0304} & \textbf{0.8912 ± 0.0126} & 0.8771 ± 0.0220 & \textbf{0.8759 ± 0.0158} & \textbf{0.8715 ± 0.0141} \\
\bottomrule
\end{tabular}
}
\end{small}
\end{minipage}

\vspace{0.65em}

\begin{minipage}{\textwidth}
\caption{Comparison of $L_1$-ERT. $L_1$ Excess Risk of the Target coverage ($L_1$-ERT, $\downarrow$) on 8 benchmarks. Best results are \textbf{bolded}.}
\label{tab:l1ert_main}
\centering
\begin{small}
\resizebox{\textwidth}{!}{
\begin{tabular}{lcccccccc}
\toprule
Method & Bike & Diamond & Gas Turbine & Naval & SGEMM & Superconductivity & Transcoding & WEC \\
\midrule
Split & 0.0602 ± 0.0071 & 0.1223 ± 0.0050 & 0.0374 ± 0.0034 & 0.1536 ± 0.0092 & 0.1044 ± 0.0046 & 0.0956 ± 0.0058 & 0.1120 ± 0.0058 & 0.1079 ± 0.0031 \\
PLCP & 0.0424 ± 0.0167 & 0.0642 ± 0.0310 & 0.0198 ± 0.0075 & 0.0789 ± 0.0279 & 0.0809 ± 0.0295 & 0.0466 ± 0.0092 & 0.0573 ± 0.0262 & 0.0656 ± 0.0182 \\
Gaussian-Scoring & 0.0337 ± 0.0086 & 0.0330 ± 0.0057 & 0.0168 ± 0.0079 & 0.0826 ± 0.0327 & 0.0733 ± 0.0349 & 0.0443 ± 0.0105 & 0.0371 ± 0.0146 & 0.0996 ± 0.0089 \\
CQR & 0.0307 ± 0.0136 & 0.0384 ± 0.0082 & 0.0170 ± 0.0070 & 0.0783 ± 0.0282 & 0.0506 ± 0.0112 & 0.0270 ± 0.0099 & 0.0510 ± 0.0117 & 0.0762 ± 0.0035 \\
CQR-ALD & 0.0274 ± 0.0114 & 0.0301 ± 0.0131 & 0.0177 ± 0.0041 & 0.0476 ± 0.0268 & 0.0404 ± 0.0204 & 0.0328 ± 0.0127 & 0.0461 ± 0.0092 & 0.0776 ± 0.0051 \\
RCP & 0.0208 ± 0.0076 & 0.0446 ± 0.0083 & 0.0155 ± 0.0067 & 0.0353 ± 0.0184 & 0.0371 ± 0.0099 & 0.0329 ± 0.0108 & 0.0342 ± 0.0078 & 0.0574 ± 0.0052 \\
RCP-ALD & 0.0274 ± 0.0142 & 0.0373 ± 0.0100 & 0.0157 ± 0.0051 & 0.0557 ± 0.0343 & 0.0427 ± 0.0185 & 0.0309 ± 0.0084 & 0.0322 ± 0.0132 & 0.0495 ± 0.0059 \\
CPCP & 0.0195 ± 0.0097 & 0.0379 ± 0.0105 & 0.0129 ± 0.0047 & 0.0271 ± 0.0148 & 0.0166 ± 0.0062 & 0.0370 ± 0.0120 & 0.0295 ± 0.0076 & 0.0632 ± 0.0090 \\
CPCP (Clip+Mix) & \textbf{0.0178 ± 0.0076} & \textbf{0.0219 ± 0.0048} & \textbf{0.0116 ± 0.0067} & \textbf{0.0268 ± 0.0102} & \textbf{0.0155 ± 0.0061} & \textbf{0.0267 ± 0.0065} & \textbf{0.0182 ± 0.0050} & \textbf{0.0433 ± 0.0048} \\
\bottomrule
\end{tabular}
}
\end{small}
\end{minipage}

\vspace{0.65em}

\begin{minipage}{\textwidth}
\caption{Comparison of $L_2$-ERT. $L_2$ Excess Risk of the Target coverage ($L_2$-ERT, $\downarrow$) on 8 benchmarks. Best results are \textbf{bolded}.}
\label{tab:l2ert_main}
\centering
\begin{small}
\resizebox{\textwidth}{!}{
\begin{tabular}{lcccccccc}
\toprule
Method & Bike & Diamond & Gas Turbine & Naval & SGEMM & Superconductivity & Transcoding & WEC \\
\midrule
Split & 0.0060 ± 0.0015 & 0.0287 ± 0.0029 & 0.0044 ± 0.0008 & 0.0338 ± 0.0067 & 0.0219 ± 0.0028 & 0.0108 ± 0.0018 & 0.0163 ± 0.0020 & 0.0225 ± 0.0018 \\
PLCP & 0.0030 ± 0.0023 & 0.0076 ± 0.0087 & 0.0011 ± 0.0011 & 0.0139 ± 0.0104 & 0.0139 ± 0.0087 & 0.0035 ± 0.0016 & 0.0059 ± 0.0051 & 0.0085 ± 0.0057 \\
Gaussian-Scoring & 0.0018 ± 0.0009 & 0.0024 ± 0.0009 & 0.0005 ± 0.0005 & 0.0127 ± 0.0130 & 0.0136 ± 0.0145 & 0.0024 ± 0.0012 & 0.0028 ± 0.0033 & 0.0079 ± 0.0035 \\
CQR & 0.0016 ± 0.0013 & 0.0024 ± 0.0009 & 0.0005 ± 0.0003 & 0.0104 ± 0.0069 & 0.0044 ± 0.0023 & 0.0007 ± 0.0014 & 0.0048 ± 0.0032 & 0.0084 ± 0.0011 \\
CQR-ALD & 0.0011 ± 0.0010 & 0.0016 ± 0.0017 & 0.0006 ± 0.0004 & 0.0049 ± 0.0075 & 0.0032 ± 0.0035 & 0.0013 ± 0.0016 & 0.0040 ± 0.0027 & 0.0125 ± 0.0025 \\
RCP & 0.0006 ± 0.0006 & 0.0029 ± 0.0011 & 0.0004 ± 0.0004 & 0.0023 ± 0.0020 & 0.0016 ± 0.0009 & 0.0014 ± 0.0015 & 0.0018 ± 0.0008 & 0.0048 ± 0.0011 \\
RCP-ALD & 0.0013 ± 0.0016 & 0.0022 ± 0.0011 & 0.0005 ± 0.0004 & 0.0072 ± 0.0110 & 0.0023 ± 0.0027 & 0.0012 ± 0.0010 & 0.0023 ± 0.0026 & 0.0039 ± 0.0013 \\
CPCP & 0.0006 ± 0.0006 & 0.0022 ± 0.0012 & 0.0003 ± 0.0002 & \textbf{0.0011 ± 0.0009} & 0.0004 ± 0.0003 & 0.0026 ± 0.0029 & 0.0015 ± 0.0007 & 0.0060 ± 0.0017 \\
CPCP (Clip+Mix) & \textbf{0.0004 ± 0.0005} & \textbf{0.0007 ± 0.0003} & \textbf{0.0002 ± 0.0002} & 0.0012 ± 0.0010 & \textbf{0.0003 ± 0.0003} & \textbf{0.0006 ± 0.0007} & \textbf{0.0005 ± 0.0003} & \textbf{0.0026 ± 0.0008} \\
\bottomrule
\end{tabular}
}
\end{small}
\end{minipage}

\vspace{-0.5em}
\end{table*}

\section{Experiments}\label{sec:experiments}

In this section, we present extensive experiments across eight classic datasets to validate our methodology. Due to space constraints, we defer the details of the data description and detailed experimental results to Appendix~\ref{app:experiments}. 

\textbf{Data.} We consider eight real-world regression benchmarks spanning engineering systems, energy, transportation, and economics. Among them, three datasets (Bike, Diamond, and Superconductivity) have one-dimensional responses, while the remaining five involve multi-dimensional responses.
The covariate dimension ranges from a moderate scale (tens of features), and the sample size varies between 10k and 70k. Each dataset is randomly split into training, calibration, and test sets with a ratio of 6:2:2.

\textbf{Baselines.} We consider standard baselines including Split Conformal Prediction (Split; \citealp{papadopoulos2002inductive}), Partition Learning Conformal Prediction (PLCP; \citealp{kiyani2024conformal}), Gaussian Scoring~\cite{braun2025multivariate}, CQR~\cite{romano2019conformalized}, and RCP~\cite{plassier2025rectifying}. In addition, to separately validate the value of our approximation described in Proposition~\ref{prop:our_taylor_approx}, we select a parametric likelihood for the nonconformity score, the Asymmetric Laplace Distribution (ALD), whose maximum likelihood estimation (MLE) is equivalent to minimizing scaled pinball loss and a regularization term for the scale function $\sigma(x)$. We construct two baselines based on CQR and RCP by substituting the quantile regression with MLE of ALD.
We provide the details of the baselines in Appendix~\ref{app:baseline_details}.

Except for Gaussian Scoring, we adopt $\ell_\infty$ norm of residuals as nonconformity scores~\cite{diquigiovanni2024importance}, which reduces to the standard absolute residual score for the case of one-dimensional labels.


\textbf{Metrics.} In real-world datasets, the data-generating conditional law is unavailable, so pointwise conditional coverage cannot be evaluated directly. We therefore use several complementary diagnostics. First, MSCE, approximated by $K$-means partitions, estimates the squared fluctuation of the conditional coverage function around the target level; under marginal validity, a smaller MSCE indicates that coverage is less driven by averaging over over-covered and under-covered regions. Second, Worst-Slice Coverage (WSC; \citealp{cauchois2021knowing,romano2020classification}) reports the lowest empirical coverage over data-dependent slices, and thus serves as a robustness diagnostic for localized undercoverage. Third, we report $L_1$-ERT and $L_2$-ERT, the excess-risk of the target coverage diagnostics proposed by \citet{braun2025conditionalcoveragediagnosticsconformal}. These metrics measure whether the coverage indicator is predictable from the covariates: if conditional coverage is well calibrated, an auditor using $X$ should not substantially improve over the constant target-coverage predictor. The $L_1$ version captures average absolute predictable deviations, whereas the $L_2$ version emphasizes larger localized deviations and is closer in spirit to MSCE. Overall, better conditional coverage is indicated by lower MSCE, lower $L_1$/$L_2$-ERT, and higher WSC. Throughout the experiments, we set the target coverage level to $\tau=90\%$. To evaluate efficiency, we also report the mean per-dimension log volume of the prediction set. 

All methods are evaluated over 20 Monte Carlo repetitions, and we report $\operatorname{mean}\pm\operatorname{std}$. Metric details and complete results are provided in Appendices~\ref{app:metric_details} and~\ref{app:detailed_results}, respectively.

\subsection{Results Analysis}


We present the main conditional-coverage results in Tables~\ref{tab:msce_main}, \ref{tab:wsc_main}, \ref{tab:l1ert_main}, and~\ref{tab:l2ert_main}; complete results, including ablations and efficiency comparisons, are provided in Appendix~\ref{app:detailed_results}. Across all eight benchmarks, CPCP consistently improves conditional-coverage diagnostics over strong baselines. The gains are especially clear for WSC, where higher values indicate fewer severe localized undercoverage failures. Equivalently, the reduction in $\tau-\mathrm{WSC}$ shows that CPCP substantially improves worst-slice miscoverage. The improvements in $L_1$-ERT and $L_2$-ERT further suggest that the remaining coverage errors are less predictable from the covariates, which is particularly important in high-dimensional settings where direct partition-based diagnostics can be noisy.

The ablation results in Appendix~\ref{app:detailed_results} further clarify the source of these gains and the finite-sample stability of CPCP. The RCP-MultiHead baseline, which only co-trains the three quantile heads without using them to form density weights or to fine-tune the target quantile, performs comparably to RCP; hence the improvement is not merely due to multitask quantile training or additional model capacity. In contrast, CPCP improves upon RCP, indicating that the density-weighted objective is the key component. The stabilized variant CPCP (Clip+Mix) further improves most conditional-coverage metrics by controlling the variance induced by extreme reciprocal weights while retaining the benefit of reweighting. The results with $\delta=0.01$ and $\delta=0.05$ are also comparable to the default $\delta=0.02$, suggesting that the method is robust to moderate changes in the finite-difference bandwidth. Finally, Table~\ref{tab:appendix_size} reports the mean per-dimension log volume, showing that CPCP achieves substantial improvements across conditional-coverage metrics while maintaining a moderate prediction-set size, rather than simply inflating the prediction sets.



\section{Conclusion}

In this paper, we study the problem of improving conditional coverage in conformal prediction with finite samples and black-box predictors. By analyzing the mean-squared error of conditional coverage probabilities, we identify a key limitation of a ubiquitous component in conformal procedures: standard quantile regression based on the pinball loss. We show that, while the conformalization step effectively corrects coverage bias, variance reduction requires a mechanism that more precisely captures the underlying heteroscedastic structure. Motivated by this observation, we develop a Taylor-based approximation that recovers this structure within the conditional-coverage MSE. Building on this insight, we propose a principled algorithmic framework with exact non-asymptotic excess-risk guarantees. Extensive experiments demonstrate the superior performance of our method, and ablation studies further validate the contribution of each component.

We emphasize that the primary value of conformal prediction lies in the finite-sample regime, and accordingly adopt a lightweight modeling strategy by employing joint quantile regression rather than explicit density estimation on the calibration set. We hope our results provide useful insights for the design of practical conformal prediction procedures.


\section*{Impact Statement}

This paper advances the theoretical and algorithmic foundations of conformal
prediction by improving conditional coverage in finite-sample settings.
Reliable uncertainty quantification is an important component in many
real-world decision-making systems, including scientific modeling, engineering,
and data-driven policy analysis, where inaccurate uncertainty estimates may
lead to overconfident or unsafe decisions.
By providing methods with exact non-asymptotic guarantees and improved
conditional calibration, our work contributes to the robustness and reliability
of machine learning models.

The proposed methods are general-purpose and do not target any specific
application domain involving sensitive personal data or automated decision-making about individuals.
As such, we do not anticipate any direct negative ethical or societal impacts
arising from this work.
We hope that our results will support the responsible deployment of machine
learning systems by enabling more reliable uncertainty estimates in practice.


\section*{Acknowledgement}
This research was supported by the National Natural Science Foundation of China (No. 72171131 and No. 72133002).

\bibliography{ref}
\bibliographystyle{icml2026}

\newpage
\appendix
\onecolumn

\section{Experiments Details}\label{app:experiments}

\subsection{Dataset Description}

We introduce our benchmarks in alphabetical order, following the sequence presented in our tables. First, we provide a summary of all eight datasets, after which we detail each one individually. We note that the majority of these datasets are sourced from the UCI repository~\cite{UCIRepository}.

\begin{table}[htbp]
    \centering
    \caption{Overview of Datasets. The feature dimension ($X$), label dimension ($Y$), and sample size ($N$) correspond to the preprocessed data used in the experiments.}
    \label{tab:datasets}
    \begin{threeparttable}
        \begin{tabular}{l c c c}
            \toprule
            \textbf{Dataset} & \textbf{Feature Dim.} ($X$) & \textbf{Label Dim.} ($Y$) & \textbf{Sample Size} ($N$) \\
            \midrule
            Bike Sharing\tnote{1}      & 13 & 1  & 17,379 \\
            Diamonds\tnote{2}          & 23 & 1  & 53,940 \\
            Gas Turbine\tnote{3}       & 9  & 2  & 36,733 \\
            Naval Propulsion\tnote{4}  & 16 & 2  & 11,934 \\
            SGEMM Product\tnote{5}     & 14 & 4  & 50,000 \\
            Superconductivity\tnote{6} & 81 & 1  & 21,263 \\
            Transcoding\tnote{7}       & 23 & 2  & 68,784 \\
            WEC (Perth)\tnote{8}       & 98 & 49 & 27,000 \\
            \bottomrule
        \end{tabular}
        \begin{tablenotes}
            \footnotesize
            \item[1] \url{https://archive.ics.uci.edu/dataset/275/bike+sharing+dataset}
            \item[2] \url{https://www.kaggle.com/datasets/shivam2503/diamonds}
            \item[3] \url{https://archive.ics.uci.edu/dataset/551/gas+turbine+co+and+nox+emission+data+set}
            \item[4] \url{https://archive.ics.uci.edu/dataset/316/condition+based+maintenance+of+naval+propulsion+plants}
            \item[5] \url{https://archive.ics.uci.edu/dataset/440/sgemm+gpu+kernel+performance} 
            \item[6] \url{https://archive.ics.uci.edu/dataset/464/superconductivty+data}
            \item[7] \url{https://archive-beta.ics.uci.edu/dataset/335/online+video+characteristics+and+transcoding+time+dataset}
            \item[8] \url{https://archive.ics.uci.edu/dataset/882/large-scale+wave+energy+farm}
        \end{tablenotes}
    \end{threeparttable}
\end{table}

\paragraph{Bike Sharing.}
This dataset pertains to urban transportation and mobility. To prevent label leakage, we removed temporal identifiers and the sub-category counts (\texttt{casual} and \texttt{registered}) from the input features. The regression target is the total count of rental bikes (\texttt{cnt}) recorded in the system.

\paragraph{Diamonds.}
Originating from the field of gemology and economics, this dataset is used to predict the market value of diamonds. We applied one-hot encoding to the categorical features (\texttt{cut}, \texttt{color}, \texttt{clarity}), dropping the first category to avoid multicollinearity. The label is the price of the diamond in US dollars.

\paragraph{Gas Turbine.}
Focusing on environmental emissions in the energy sector, this dataset aggregates sensor data from a gas turbine power plant over the period 2011--2015. The input features include various sensor measurements, while the label is a multi-dimensional vector representing Carbon Monoxide (CO) and Nitrogen Oxides (NOx) emissions.

\paragraph{Naval Propulsion.}
Used for predictive maintenance in naval engineering, this dataset contains data generated from a gas turbine propulsion plant simulator. The task involves predicting a two-dimensional label corresponding to the decay state coefficients of the GT compressor and the GT turbine.

\paragraph{SGEMM Product.}
In the domain of high-performance computing, this dataset measures the performance of matrix multiplication kernels on GPUs. Due to the massive scale of the original data, we randomly sampled $N=50,000$ instances for our experiments. The features represent kernel parameters, and the label consists of the execution times from four independent runs.

\paragraph{Superconductivity.}
This dataset comes from materials science and condensed matter physics. The input features are derived from the chemical formula and elemental properties (e.g., atomic mass, density) of various materials. The scalar regression target is the critical temperature ($T_c$) at which the material exhibits superconductivity.

\paragraph{Transcoding.}
Addressing cloud resource management for video processing, this dataset characterizes video transcoding tasks. We removed non-informative identifiers (such as \texttt{id} and \texttt{url}) and one-hot encoded the video codec attributes. The label is a dual-target variable comprising the transcoding time and memory usage.

\paragraph{WEC (Perth).}
Situated in the renewable energy domain, this dataset simulates the performance of a wave energy farm. The input features consist of the coordinate positions ($X, Y$) for 49 floating buoys. The target is a high-dimensional vector ($d_y=49$) representing the power output generated by each individual buoy.


\vspace{2em}

\subsection{Baseline Details}\label{app:baseline_details}

With the exception of CQR, all methods utilize the training set to learn $\hat\mu$ via mean squared error (MSE) minimization. The estimator $\hat \mu$ is parameterized by a three-layer MLP with ReLU activation. The nonconformity score is given by:
\begin{equation}
    S(X,Y) = \|Y - \hat\mu(X)\|_{\infty},
\end{equation}
reducing to the standard absolute residual $S(X,Y) = |Y - \hat\mu(X)|$ for univariate labels.

As standard Split Conformal Prediction and Rectified Conformal Prediction (RCP) have been detailed in Section~\ref{sec:preliminaries}, we proceed to introduce the other main baselines.

\paragraph{PLCP.}

Partition Learning Conformal Prediction (PLCP; \citealp{kiyani2024conformal}) aims to improve upon the marginal nature of Split CP by partitioning the feature space into \textbf{learned} regions. It optimizes these partitions to group samples of similar difficulty and applies Split CP locally within each group. This approach is formulated as a co-training procedure:
\begin{equation}
    h^\star, \boldsymbol{q}^\star=\underset{\substack{\boldsymbol{q} \in \mathbb{R}^G, h \in \mathcal{H}}}{\operatorname{argmin}} \frac{1}{n} \sum_{j=1}^n \sum_{i=1}^G h^i\left(X_j\right) \rho_\tau\left(q_i, S_j\right),    
\end{equation}
where $G$ denotes the number of distinct groups (we use $G$ to avoid conflict with the calibration sample size $m$, differing from the notation in \citealp{kiyani2024conformal}), and $h: \mathcal{X} \rightarrow \Delta_G$ is a partition function assigning samples to groups.

Since $G$ is finite, provided that $h$ converges, the resulting $\boldsymbol{q}$ corresponds exactly to the empirical $\tau$-quantile for each group. Consequently, PLCP avoids the issue discussed at the start of Section~\ref{sec:theory}, namely the conformalization shift $\hat\gamma$ often associated with quantile regression. We report results for $G=20$ in the main paper and provide supplementary results for $G=50$ in Appendix~\ref{app:detailed_results}.

\paragraph{Gaussian Scoring.}
As described in \cite{braun2025multivariate}, this method assumes a multivariate Gaussian density for $Y \mid X$. On the training set, we train two MLPs to simultaneously predict the conditional mean $\hat{\mu}(X)$ and covariance matrix $\hat{\Sigma}(X)$ by minimizing the negative log-likelihood (NLL). The nonconformity score is defined as the Mahalanobis distance:
\begin{equation}
    S_{\mathrm{Mah}}(X, Y)=\left\|\hat\Sigma(X)^{-1 / 2}\left(Y-\hat\mu(X)\right)\right\|_2,
\end{equation}
which reduces to the standard normalized residual $S(X, Y) = |Y - \hat{\mu}(X)| / \hat{\sigma}(X)$ \cite{lei2018distribution} when $Y$ is one-dimensional.

\paragraph{CQR.}
Conformalized Quantile Regression (CQR; \citealp{romano2019conformalized}) constructs adaptive intervals by directly estimating conditional quantiles on the training set, rather than estimating the conditional mean $\hat\mu$. We train two MLPs to output the lower ($\hat{q}_{\alpha/2}$) and upper ($\hat{q}_{1-\alpha/2}$) quantiles by minimizing the pinball loss. The nonconformity score is defined as:
\begin{equation}
    S(X, Y) = \max(\hat{q}_{\alpha/2}(X) - Y, Y - \hat{q}_{1-\alpha/2}(X)).
\end{equation}

We extend CQR to multivariate labels $Y \in \mathbb{R}^d$ by predicting the quantiles for each dimension simultaneously. In this setting, the nonconformity score is the maximum signed distance to the predicted interval boundaries across all dimensions:
\begin{equation}
    S(X, Y) = \max_{j\in [d]}\max(\hat{q}^{(j)}_{\alpha/2}(X) - Y^{(j)}, Y^{(j)} - \hat{q}^{(j)}_{1-\alpha/2}(X)).
\end{equation}


\paragraph{CQR/RCP-ALD.}

We begin by introducing the Asymmetric Laplace Distribution (ALD). Given a location parameter $\mu$, scale parameter $\sigma$, and skewness parameter $\tau$, the PDF of the ALD is:
\begin{equation}
    f(y \mid \mu, \sigma, \tau)=\frac{\tau(1-\tau)}{\sigma} \exp \left(-\frac{\rho_\tau(\mu,y)}{\sigma}\right).
\end{equation}
Adapting this to model the conditional distribution $S \mid X$, we express the maximum likelihood estimation (MLE) in our notation as:
\begin{equation}
    \max_{\hat\sigma(x), \hat q(x)} \frac{1}{n}\sum_{i=1}^n \left(-\ln(\hat\sigma(x_i))-\frac{\rho_\tau(\hat{q}(x_i), s_i)}{\hat\sigma(x_i)}\right).    
\end{equation}

As discussed in Example~\ref{example1}, we select the ALD as a representative of the location-scale family. Notably, the MLE for this parametric model corresponds to a weighted pinball loss, where the weights are the inverse scale estimates $1/\hat\sigma(x)$. We incorporate this into CQR and RCP by replacing the standard quantile regression (pinball loss) with the ALD-based likelihood, yielding two additional baselines denoted by the suffix \textit{-ALD}. Although we do not assume the data strictly follow an ALD, we employ this formulation as a quasi-likelihood to derive the objective function.

These two baselines are designed to isolate and examine the benefits of the approximation constructed in Proposition~\ref{prop:our_taylor_approx}, effectively serving as an ablation study.

\paragraph{RCP-Multihead.} We use this baseline to examine the impact of co-training three quantiles, i.e., the multitasking step. In this baseline, we only co-train the three quantiles and use the central quantile estimator, $\hat q_\tau$, to implement the subsequent RCP procedures.

\vspace{2em}

\subsection{CPCP Details}

In this section, we provide a detailed description of the implementation, network architecture, and optimization procedure for our proposed method, \textbf{Colorful Pinball Conformal Prediction (CPCP)}. 

\paragraph{Network architecture.}
To ensure the structural constraint that quantile estimates remain non-crossing (i.e., $\hat{q}_{\text{low}}(x) \le \hat{q}_{\text{main}}(x) \le \hat{q}_{\text{high}}(x)$), we implement a specialized neural network architecture named \texttt{MonotonicThreeHeadNet}.

The network consists of a shared feature extractor and three specific heads:
\begin{itemize}
    \item \textbf{Shared feature extractor:} An MLP with two hidden layers (256 units each) and ReLU activation functions.
    \item \textbf{Main head:} A linear layer that outputs the central quantile estimate $\hat{q}_{\text{main}}(x)$.
    \item \textbf{Auxiliary heads:} Two separate linear layers followed by a \texttt{Softplus} activation function to predict the positive gaps $\Delta_{\text{low}}(x)$ and $\Delta_{\text{high}}(x)$.
\end{itemize}

The final outputs are constructed as:
\begin{align}
    \hat{q}_{\text{low}}(x) &= \hat{q}_{\text{main}}(x) - \operatorname{Softplus}(\Delta_{\text{low}}(x)), \\
    \hat{q}_{\text{high}}(x) &= \hat{q}_{\text{main}}(x) + \operatorname{Softplus}(\Delta_{\text{high}}(x)).
\end{align}
This parameterization strictly guarantees monotonicity and ensures that the denominator in our weight calculation remains positive.

\paragraph{Data splitting.}
Following Algorithm \ref{alg:colorful_pinball_main}, we partition the calibration dataset $\mathcal{D}_{\text{cal}}$ into three disjoint subsets:
\begin{itemize}
    \item $\mathcal{D}_{\text{cal},1}$ (Base Training): 40\% of the calibration data, used to train the three initial quantile heads.
    \item $\mathcal{D}_{\text{cal},2}$ (Fine-tuning): 40\% of the calibration data, used to fine-tune the main head with estimated density weights.
    \item $\mathcal{D}_{\text{cal},3}$ (Conformalization): 20\% of the calibration data, used to compute the final rectified conformity scores $R_j$ and derive the predictive interval.
\end{itemize}

\paragraph{Clipping and Mixing.} 
To prevent extreme weights from destabilizing training, we implement two regularization techniques:
\begin{itemize}
    \item \textbf{Clipping:} The \textbf{normalized} weights are clamped to a maximum value (default $M=5.0$), as detailed in Algorithm~\ref{alg:colorful_pinball_clip}.
    \item \textbf{Loss Mixing:} The final objective function is a convex combination of the weighted loss and the original pinball loss:
    \begin{equation}
        \mathcal{L}_{\text{total}} = \lambda \mathcal{L}_{\text{weighted}} + (1 - \lambda) \mathcal{L}_{\text{pinball}},
    \end{equation}
    where $\lambda$ is the mixing ratio (default $\lambda=0.5$).
\end{itemize}
Effectively, these two strategies impose an artificial lower bound (via $\lambda=0.5$) and upper bound ($M=5.0$) on the normalized weights.

\paragraph{Quantile gap.} 
Although we derive the non-asymptotic expression $\delta(n)$, we take $\delta=0.02$ as default throughout our experiments. For comparison, we also provide the results with $\delta=0.01,0.05$, which we denote by CPCP-$0.01$ and CPCP-$0.05$, respectively. 

In general, a smaller $\delta$ reduces the bias of the finite-difference estimation but requires a larger sample size to maintain stability. Our theory establishes that the optimal rate for this trade-off is $\delta^{\star} \asymp n^{-1/6}$.

\begin{algorithm}[htb]
  \caption{Colorful Pinball Conformal Prediction (Clipped Version)}
  \label{alg:colorful_pinball_clip}
  \begin{algorithmic}[1]
    \STATE {\bfseries Input:} Calibration data $\mathcal{D}_{\text{cal}}$, Point predictor $\hat\mu$, Test input $x_{\text{test}}$, Target coverage level $\tau=1-\alpha$, Bandwidth $\delta$, Clipping multiplier $M$
    
    \STATE Split $\mathcal{D}_{\text{cal}}$ into three disjoint parts $\mathcal{D}_{\text{cal},1}$, $\mathcal{D}_{\text{cal},2}$, $\mathcal{D}_{\text{cal},3}$.
  
    \STATE Compute conformity scores $S_i$ for all $(x_i, y_i) \in \mathcal{D}_{\text{cal}}$.
    
    \STATE Train three quantile estimators $\hat{q}_{\text{low}}$, $\hat{q}_{\text{main}}$, and $\hat{q}_{\text{high}}$ (for levels $\tau-\delta, \tau, \tau+\delta$) jointly on $\mathcal{D}_{\text{cal},1}$.    
    
    \STATE Compute the raw weight for all $(x_i, y_i) \in \mathcal{D}_{\text{cal},2}$: 
    $$
        w_i \leftarrow \frac{2\delta}{\hat{q}_{\text{high}}(x_i) - \hat{q}_{\text{low}}(x_i)}.
    $$
    \STATE Clip and normalize the weights using threshold $\tau \leftarrow M \cdot \text{mean}(\{w_j\}_{j \in \mathcal{D}_{\text{cal},2}})$:
    $$
        w_i \leftarrow \min(w_i, \tau), \quad w_i \leftarrow w_i / \sum_{j \in \mathcal{D}_{\text{cal},2}} w_j.
    $$

    \STATE Fine-tune the parameters of $\hat{q}_{\text{main}}$ on $\mathcal{D}_{\text{cal},2}$ using weights $w_i$ to get the final quantile estimator $\hat q_\tau$.

    \STATE Compute residuals for all $j \in \mathcal{D}_{\text{cal},3}$:
    $$
        R_j \leftarrow S_j - \hat q_\tau(x_j)
    $$
    \STATE Compute empirical quantile $\hat\gamma$ as the $\lceil (|\mathcal{D}_{\text{cal},3}|+1)(1-\alpha) \rceil$-th smallest value of $\{R_j\}$.
    
    \STATE {\bfseries Output:} $\mathcal{C}_\alpha(x_{\text{test}}) = \{y: S(x_{\text{test}},y) \leq \hat\gamma + \hat q_\tau(x_{\text{test}})\}$
  \end{algorithmic}
\end{algorithm}

\subsection{Metric Details}\label{app:metric_details}

In summary, we consider two approaches to approximate coverage performance, based on clustering and random projections, respectively.

\paragraph{MSCE}
For MSCE, we use K-means to partition samples into $K$ groups. There is a fundamental trade-off here: increasing $K$ approximates the conditional coverage with finer granularity, but reduces the sample size within each group, thereby decreasing the precision of the empirical coverage estimate. Our guiding principle is to ensure sufficient samples (on the order of hundreds) within each group to evaluate coverage with percent-level precision. Consequently, we examine two settings: $K=10$ and $K=30$, as reported in Tables~\ref{tab:appendix_msce_10} and \ref{tab:appendix_msce_30}.

\paragraph{WSC} 

\citet{romano2020classification,cauchois2021knowing} propose worst-slab coverage (WSC). We first define a slab in the covariate space as:
\[
S_{v, a, b}=\left\{x \in \mathbb{R}^d \mid a \leq v^T x \leq b\right\},
\]
where the direction $v \in \mathbb{R}^d$ and scalars $a < b$ are chosen to define the slice. For a predictive set $\mathcal{C}_\alpha$ and a threshold $\eta \in(0,1)$, the WSC is defined as the minimum coverage over all sufficiently large slabs:
\begin{equation}
\operatorname{WSC}({\mathcal{C}}_\alpha ; \eta)=\inf_{\substack{v \in \mathbb{R}^d , a<b}}\left\{\mathbb{P}\left[Y \in \mathcal{C}_\alpha(X) \mid X \in S_{v, a, b}\right] \;\middle|\; \mathbb{P}\left[X \in S_{v, a, b}\right] \geq 1-\eta\right\}.
\end{equation}

In practice, we estimate WSC for a given $\mathcal{C}_\alpha$ by sampling 1,000 independent vectors $v$ from the unit sphere in $\mathbb{R}^d$ and optimizing parameters $a$ and $b$ via grid search. To mitigate finite-sample negative bias, we partition the test data into two disjoint subsets (e.g., with a $25\%$-$75\%$ split). We use the first subset to determine the optimal parameters $v^\star, a^\star, b^\star$, and the second to evaluate the conditional coverage:
$$
\mathbb{P}\left[Y \in \mathcal{C}_\alpha(X) \mid X \in S_{v^\star, a^\star, b^\star}\right].
$$

\paragraph{$\ell$-ERT}
\citet{braun2025conditionalcoveragediagnosticsconformal} introduce the Excess Risk of the Target coverage (ERT), which casts coverage evaluation as a binary classification problem. This approach relies on the insight that under perfect conditional coverage, the coverage indicator $Z = \mathbb{I}\{Y \in \mathcal{C}_\alpha(X)\}$ behaves as a Bernoulli variable with constant probability $1-\alpha$, independent of $X$. Consequently, any classifier $h(X)$ that achieves a lower risk than the constant predictor $1-\alpha$ exposes a violation of conditional validity. The metric is defined as the risk reduction:
\begin{equation}
\ell\text{-ERT} = \mathcal{R}_\ell(1-\alpha) - \mathcal{R}_\ell(h),
\end{equation}
where $\mathcal{R}_\ell$ is the risk under a proper loss function $\ell$. Depending on the choice of $\ell$, this metric estimates different distances; specifically, we consider $L_1$-ERT and $L_2$-ERT, which provide lower-bound estimates for $\mathbb{E}[|\pi(X) - \tau|]$ and $\mathbb{E}[(\pi(X) - \tau)^2]$ (i.e., MSCE), respectively. In our experiments, we employ logistic regression as the classifier $h$ to estimate these quantities.

\paragraph{Volume.} 
We report the average log-volume per dimension, calculated as $\frac{1}{d}\log \mathrm{Volume}$. Because we use the infinity norm for the nonconformity score, the predictive sets for all baselines (except Gaussian scoring) are hyper-rectangles. Gaussian scoring, in contrast, produces hyper-ellipsoidal sets. Since ellipsoids are inherently more compact than hyper-rectangles, Gaussian scoring enjoys a natural advantage on this metric.

\vspace{2em}

\subsection{Detailed Results}\label{app:detailed_results}

\begin{table*}[ht]
\caption{Full comparison of MSCE ($K=10$) . MSCE ($\downarrow$) on all benchmarks with all baselines.}
\label{tab:appendix_msce_10}
\begin{center}
\begin{small}
\resizebox{\textwidth}{!}{
\begin{tabular}{lcccccccc}
\toprule
Method & Bike & Diamond & Gas Turbine & Naval & SGEMM & Superconductivity & Transcoding & WEC \\
\midrule
Split & 0.0031 ± 0.0008 & 0.0118 ± 0.0035 & 0.0033 ± 0.0006 & 0.0351 ± 0.0064 & 0.0039 ± 0.0010 & 0.0082 ± 0.0010 & 0.0125 ± 0.0031 & 0.0123 ± 0.0019 \\
PLCP (G=20) & 0.0021 ± 0.0011 & 0.0032 ± 0.0033 & 0.0008 ± 0.0009 & 0.0162 ± 0.0087 & 0.0026 ± 0.0013 & 0.0017 ± 0.0008 & 0.0027 ± 0.0038 & 0.0031 ± 0.0033 \\
PLCP (G=50) & 0.0016 ± 0.0007 & 0.0016 ± 0.0022 & 0.0008 ± 0.0010 & 0.0124 ± 0.0075 & 0.0015 ± 0.0015 & 0.0015 ± 0.0009 & 0.0009 ± 0.0006 & 0.0017 ± 0.0015 \\
Gaussian-Scoring & 0.0011 ± 0.0004 & 0.0006 ± 0.0003 & \textbf{0.0004 ± 0.0002} & 0.0129 ± 0.0095 & 0.0041 ± 0.0041 & 0.0026 ± 0.0011 & 0.0016 ± 0.0014 & 0.0057 ± 0.0026 \\
CQR & 0.0011 ± 0.0007 & 0.0010 ± 0.0004 & 0.0006 ± 0.0002 & 0.0120 ± 0.0064 & 0.0012 ± 0.0005 & 0.0009 ± 0.0005 & 0.0016 ± 0.0009 & 0.0061 ± 0.0007 \\
CQR-ALD & 0.0008 ± 0.0003 & 0.0007 ± 0.0007 & 0.0006 ± 0.0003 & 0.0055 ± 0.0059 & 0.0009 ± 0.0006 & 0.0015 ± 0.0010 & 0.0021 ± 0.0029 & 0.0045 ± 0.0009 \\
RCP & 0.0010 ± 0.0005 & 0.0013 ± 0.0006 & 0.0006 ± 0.0004 & 0.0029 ± 0.0020 & 0.0007 ± 0.0003 & 0.0013 ± 0.0011 & 0.0009 ± 0.0003 & 0.0030 ± 0.0005 \\
RCP-ALD & 0.0013 ± 0.0010 & 0.0010 ± 0.0006 & 0.0005 ± 0.0003 & 0.0080 ± 0.0102 & 0.0009 ± 0.0007 & 0.0011 ± 0.0005 & 0.0010 ± 0.0007 & 0.0025 ± 0.0008 \\
RCP-MultiHead & 0.0010 ± 0.0004 & 0.0015 ± 0.0006 & 0.0005 ± 0.0003 & 0.0027 ± 0.0013 & 0.0010 ± 0.0004 & 0.0011 ± 0.0008 & 0.0008 ± 0.0005 & 0.0031 ± 0.0008 \\
CPCP & 0.0009 ± 0.0004 & 0.0009 ± 0.0004 & \textbf{0.0004 ± 0.0002} & 0.0019 ± 0.0010 & 0.0003 ± 0.0002 & 0.0014 ± 0.0011 & 0.0009 ± 0.0006 & 0.0025 ± 0.0009 \\
CPCP (Clip) & 0.0010 ± 0.0004 & 0.0005 ± 0.0003 & 0.0005 ± 0.0003 & 0.0020 ± 0.0013 & 0.0003 ± 0.0002 & 0.0011 ± 0.0006 & 0.0007 ± 0.0004 & 0.0016 ± 0.0006 \\
CPCP (Mix) & 0.0009 ± 0.0005 & 0.0006 ± 0.0004 & \textbf{0.0004 ± 0.0002} & 0.0014 ± 0.0007 & 0.0004 ± 0.0001 & 0.0009 ± 0.0005 & 0.0006 ± 0.0003 & 0.0016 ± 0.0005 \\
CPCP (Clip+Mix) & 0.0008 ± 0.0002 & \textbf{0.0004 ± 0.0002} & \textbf{0.0004 ± 0.0002} & 0.0019 ± 0.0009 & 0.0003 ± 0.0001 & 0.0007 ± 0.0004 & \textbf{0.0004 ± 0.0002} & 0.0012 ± 0.0004 \\
CPCP-$0.01$ & 0.0010 ± 0.0004 & 0.0007 ± 0.0004 & 0.0005 ± 0.0002 & 0.0023 ± 0.0012 & \textbf{0.0002 ± 0.0001} & 0.0009 ± 0.0005 & 0.0010 ± 0.0006 & 0.0024 ± 0.0008 \\
CPCP-$0.01$ (Clip+Mix) & 0.0008 ± 0.0004 & \textbf{0.0004 ± 0.0003} & \textbf{0.0004 ± 0.0002} & \textbf{0.0012 ± 0.0007} & 0.0003 ± 0.0001 & \textbf{0.0006 ± 0.0004} & \textbf{0.0004 ± 0.0002} & 0.0013 ± 0.0005 \\
CPCP-$0.05$ & 0.0008 ± 0.0004 & 0.0008 ± 0.0005 & \textbf{0.0004 ± 0.0002} & 0.0017 ± 0.0009 & 0.0003 ± 0.0002 & 0.0009 ± 0.0004 & 0.0010 ± 0.0007 & 0.0026 ± 0.0013 \\
CPCP-$0.05$ (Clip+Mix) & \textbf{0.0007 ± 0.0003} & 0.0005 ± 0.0002 & \textbf{0.0004 ± 0.0002} & 0.0016 ± 0.0010 & 0.0003 ± 0.0002 & \textbf{0.0006 ± 0.0003} & \textbf{0.0004 ± 0.0003} & \textbf{0.0008 ± 0.0003} \\
\bottomrule
\end{tabular}
}
\end{small}
\end{center}
\vskip -0.1in
\end{table*}

\begin{table*}[ht]
\caption{Full comparison of MSCE ($K=30$). MSCE ($\downarrow$) on all benchmarks with all baselines.}
\label{tab:appendix_msce_30}
\begin{center}
\begin{small}
\resizebox{\textwidth}{!}{
\begin{tabular}{lcccccccc}
\toprule
Method & Bike & Diamond & Gas Turbine & Naval & SGEMM & Superconductivity & Transcoding & WEC \\
\midrule
Split & 0.0053 ± 0.0013 & 0.0138 ± 0.0021 & 0.0046 ± 0.0008 & 0.0392 ± 0.0070 & 0.0058 ± 0.0010 & 0.0089 ± 0.0012 & 0.0147 ± 0.0025 & 0.0186 ± 0.0016 \\
PLCP (G=20) & 0.0037 ± 0.0015 & 0.0045 ± 0.0041 & 0.0017 ± 0.0010 & 0.0200 ± 0.0078 & 0.0038 ± 0.0018 & 0.0040 ± 0.0012 & 0.0043 ± 0.0041 & 0.0062 ± 0.0055 \\
PLCP (G=50) & 0.0032 ± 0.0009 & 0.0025 ± 0.0036 & 0.0017 ± 0.0012 & 0.0155 ± 0.0082 & 0.0026 ± 0.0023 & 0.0040 ± 0.0016 & 0.0023 ± 0.0015 & 0.0036 ± 0.0024 \\
Gaussian-Scoring & 0.0023 ± 0.0006 & 0.0014 ± 0.0004 & 0.0010 ± 0.0003 & 0.0181 ± 0.0119 & 0.0056 ± 0.0052 & 0.0040 ± 0.0013 & 0.0026 ± 0.0020 & 0.0082 ± 0.0033 \\
CQR & 0.0025 ± 0.0012 & 0.0016 ± 0.0004 & 0.0011 ± 0.0003 & 0.0159 ± 0.0083 & 0.0018 ± 0.0007 & 0.0018 ± 0.0008 & 0.0034 ± 0.0014 & 0.0084 ± 0.0009 \\
CQR-ALD & 0.0019 ± 0.0005 & 0.0012 ± 0.0008 & 0.0011 ± 0.0003 & 0.0085 ± 0.0082 & 0.0014 ± 0.0009 & 0.0025 ± 0.0012 & 0.0036 ± 0.0030 & 0.0076 ± 0.0010 \\
RCP & 0.0019 ± 0.0006 & 0.0021 ± 0.0006 & 0.0012 ± 0.0004 & 0.0061 ± 0.0026 & 0.0011 ± 0.0004 & 0.0023 ± 0.0012 & 0.0016 ± 0.0003 & 0.0046 ± 0.0007 \\
RCP-ALD & 0.0023 ± 0.0010 & 0.0017 ± 0.0006 & 0.0012 ± 0.0005 & 0.0114 ± 0.0106 & 0.0014 ± 0.0009 & 0.0021 ± 0.0006 & 0.0023 ± 0.0024 & 0.0038 ± 0.0010 \\
RCP-MultiHead & 0.0020 ± 0.0004 & 0.0022 ± 0.0006 & 0.0011 ± 0.0004 & 0.0063 ± 0.0021 & 0.0015 ± 0.0004 & 0.0021 ± 0.0007 & 0.0016 ± 0.0007 & 0.0046 ± 0.0009 \\
CPCP & 0.0021 ± 0.0008 & 0.0020 ± 0.0008 & 0.0010 ± 0.0003 & 0.0054 ± 0.0019 & 0.0007 ± 0.0002 & 0.0026 ± 0.0018 & 0.0019 ± 0.0009 & 0.0043 ± 0.0013 \\
CPCP (Clip) & 0.0022 ± 0.0006 & 0.0012 ± 0.0005 & 0.0011 ± 0.0005 & 0.0056 ± 0.0019 & 0.0007 ± 0.0002 & 0.0022 ± 0.0008 & 0.0018 ± 0.0014 & 0.0035 ± 0.0007 \\
CPCP (Mix) & 0.0019 ± 0.0007 & 0.0015 ± 0.0008 & \textbf{0.0009 ± 0.0003} & 0.0046 ± 0.0014 & 0.0007 ± 0.0001 & 0.0021 ± 0.0013 & 0.0013 ± 0.0005 & 0.0029 ± 0.0008 \\
CPCP (Clip+Mix) & \textbf{0.0018 ± 0.0005} & \textbf{0.0010 ± 0.0002} & 0.0011 ± 0.0003 & 0.0054 ± 0.0019 & 0.0007 ± 0.0002 & 0.0016 ± 0.0005 & \textbf{0.0010 ± 0.0002} & 0.0026 ± 0.0008 \\
CPCP-$0.01$ & 0.0023 ± 0.0007 & 0.0020 ± 0.0008 & 0.0012 ± 0.0003 & 0.0062 ± 0.0018 & \textbf{0.0006 ± 0.0002} & 0.0023 ± 0.0009 & 0.0025 ± 0.0020 & 0.0042 ± 0.0011 \\
CPCP-$0.01$ (Clip+Mix) & \textbf{0.0018 ± 0.0005} & 0.0011 ± 0.0004 & \textbf{0.0009 ± 0.0002} & \textbf{0.0043 ± 0.0014} & 0.0007 ± 0.0002 & 0.0017 ± 0.0007 & \textbf{0.0010 ± 0.0004} & 0.0026 ± 0.0006 \\
CPCP-$0.05$ & \textbf{0.0018 ± 0.0004} & 0.0021 ± 0.0012 & 0.0011 ± 0.0003 & 0.0058 ± 0.0022 & 0.0007 ± 0.0002 & 0.0020 ± 0.0009 & 0.0023 ± 0.0012 & 0.0045 ± 0.0018 \\
CPCP-$0.05$ (Clip+Mix) & \textbf{0.0018 ± 0.0004} & \textbf{0.0010 ± 0.0002} & 0.0010 ± 0.0003 & 0.0049 ± 0.0016 & 0.0007 ± 0.0002 & \textbf{0.0015 ± 0.0003} & \textbf{0.0010 ± 0.0003} & \textbf{0.0020 ± 0.0004} \\
\bottomrule
\end{tabular}
}
\end{small}
\end{center}
\vskip -0.1in
\end{table*}

\begin{table*}[ht]
\caption{Full comparison of WSC. WSC ($\uparrow$) on all benchmarks with all baselines.}
\label{tab:appendix_wsc}
\begin{center}
\begin{small}
\resizebox{\textwidth}{!}{
\begin{tabular}{lcccccccc}
\toprule
Method & Bike & Diamond & Gas Turbine & Naval & SGEMM & Superconductivity & Transcoding & WEC \\
\midrule
Split & 0.8133 ± 0.0277 & 0.6480 ± 0.0206 & 0.8192 ± 0.0159 & 0.5428 ± 0.0446 & 0.7435 ± 0.0156 & 0.7712 ± 0.0266 & 0.6797 ± 0.0234 & 0.7623 ± 0.0195 \\
PLCP (G=20) & 0.8559 ± 0.0341 & 0.8068 ± 0.0680 & 0.8690 ± 0.0229 & 0.6837 ± 0.0623 & 0.7891 ± 0.0514 & 0.8529 ± 0.0268 & 0.8123 ± 0.0645 & 0.8413 ± 0.0369 \\
PLCP (G=50) & 0.8663 ± 0.0203 & 0.8498 ± 0.0555 & 0.8738 ± 0.0162 & 0.7127 ± 0.0828 & 0.8303 ± 0.0608 & 0.8585 ± 0.0243 & 0.8450 ± 0.0208 & 0.8604 ± 0.0199 \\
Gaussian-Scoring & 0.8611 ± 0.0278 & 0.8613 ± 0.0170 & 0.8804 ± 0.0166 & 0.6928 ± 0.0911 & 0.7943 ± 0.0646 & 0.8434 ± 0.0232 & 0.8429 ± 0.0355 & 0.8259 ± 0.0347 \\
CQR & 0.8641 ± 0.0245 & 0.8563 ± 0.0159 & 0.8801 ± 0.0217 & 0.6997 ± 0.0837 & 0.8393 ± 0.0201 & 0.8704 ± 0.0235 & 0.8175 ± 0.0253 & 0.8149 ± 0.0188 \\
CQR-ALD & 0.8696 ± 0.0246 & 0.8735 ± 0.0259 & 0.8807 ± 0.0156 & 0.7836 ± 0.0745 & 0.8513 ± 0.0364 & 0.8622 ± 0.0246 & 0.8363 ± 0.0227 & 0.8222 ± 0.0193 \\
RCP & 0.8849 ± 0.0240 & 0.8448 ± 0.0237 & 0.8752 ± 0.0158 & 0.8002 ± 0.0457 & 0.8627 ± 0.0149 & 0.8638 ± 0.0260 & 0.8515 ± 0.0196 & 0.8516 ± 0.0198 \\
RCP-ALD & 0.8743 ± 0.0250 & 0.8558 ± 0.0178 & 0.8872 ± 0.0174 & 0.7596 ± 0.0803 & 0.8597 ± 0.0331 & 0.8779 ± 0.0181 & 0.8561 ± 0.0226 & 0.8582 ± 0.0173 \\
RCP-MultiHead & 0.8911 ± 0.0191 & 0.8361 ± 0.0172 & 0.8732 ± 0.0164 & 0.7985 ± 0.0286 & 0.8484 ± 0.0162 & 0.8745 ± 0.0247 & 0.8529 ± 0.0226 & 0.8519 ± 0.0150 \\
CPCP & 0.8820 ± 0.0132 & 0.8589 ± 0.0216 & 0.8844 ± 0.0125 & 0.8169 ± 0.0408 & 0.8864 ± 0.0119 & 0.8733 ± 0.0251 & 0.8548 ± 0.0172 & 0.8462 ± 0.0266 \\
CPCP (Clip) & 0.8895 ± 0.0213 & 0.8682 ± 0.0173 & 0.8805 ± 0.0175 & 0.8205 ± 0.0357 & 0.8860 ± 0.0188 & 0.8701 ± 0.0176 & 0.8706 ± 0.0191 & 0.8604 ± 0.0219 \\
CPCP (Mix) & 0.8812 ± 0.0182 & 0.8676 ± 0.0119 & 0.8816 ± 0.0141 & 0.8335 ± 0.0333 & 0.8873 ± 0.0139 & 0.8767 ± 0.0378 & 0.8736 ± 0.0137 & 0.8648 ± 0.0150 \\
CPCP (Clip+Mix) & 0.8882 ± 0.0250 & \textbf{0.8802 ± 0.0099} & \textbf{0.8912 ± 0.0139} & 0.8320 ± 0.0304 & \textbf{0.8912 ± 0.0126} & 0.8771 ± 0.0220 & 0.8759 ± 0.0158 & 0.8715 ± 0.0141 \\
CPCP-$0.01$ & 0.8830 ± 0.0141 & 0.8571 ± 0.0165 & 0.8794 ± 0.0126 & 0.8048 ± 0.0460 & 0.8852 ± 0.0094 & 0.8722 ± 0.0256 & 0.8581 ± 0.0229 & 0.8560 ± 0.0184 \\
CPCP-$0.01$ (Clip+Mix) & \textbf{0.8918 ± 0.0158} & 0.8783 ± 0.0200 & 0.8894 ± 0.0149 & \textbf{0.8390 ± 0.0357} & 0.8853 ± 0.0083 & 0.8801 ± 0.0199 & \textbf{0.8811 ± 0.0135} & 0.8700 ± 0.0211 \\
CPCP-$0.05$ & 0.8888 ± 0.0150 & 0.8617 ± 0.0202 & 0.8831 ± 0.0154 & 0.8252 ± 0.0399 & 0.8873 ± 0.0128 & 0.8688 ± 0.0227 & 0.8519 ± 0.0216 & 0.8435 ± 0.0254 \\
CPCP-$0.05$ (Clip+Mix) & 0.8915 ± 0.0162 & 0.8801 ± 0.0178 & 0.8841 ± 0.0123 & 0.8258 ± 0.0368 & 0.8865 ± 0.0129 & \textbf{0.8846 ± 0.0144} & 0.8809 ± 0.0145 & \textbf{0.8750 ± 0.0172} \\
\bottomrule
\end{tabular}
}
\end{small}
\end{center}
\vskip -0.1in
\end{table*}

\begin{table*}[ht]
\caption{Full comparison of $L_1$-ERT. $L_1$-ERT ($\downarrow$) on all benchmarks with all baselines.}
\label{tab:appendix_l1_ert}
\begin{center}
\begin{small}
\resizebox{\textwidth}{!}{
\begin{tabular}{lcccccccc}
\toprule
Method & Bike & Diamond & Gas Turbine & Naval & SGEMM & Superconductivity & Transcoding & WEC \\
\midrule
Split & 0.0602 ± 0.0071 & 0.1223 ± 0.0050 & 0.0374 ± 0.0034 & 0.1536 ± 0.0092 & 0.1044 ± 0.0046 & 0.0956 ± 0.0058 & 0.1120 ± 0.0058 & 0.1079 ± 0.0031 \\
PLCP (G=20) & 0.0424 ± 0.0167 & 0.0642 ± 0.0310 & 0.0198 ± 0.0075 & 0.0789 ± 0.0279 & 0.0809 ± 0.0295 & 0.0466 ± 0.0092 & 0.0573 ± 0.0262 & 0.0656 ± 0.0182 \\
PLCP (G=50) & 0.0351 ± 0.0142 & 0.0445 ± 0.0241 & 0.0219 ± 0.0075 & 0.0685 ± 0.0296 & 0.0557 ± 0.0331 & 0.0455 ± 0.0100 & 0.0417 ± 0.0102 & 0.0565 ± 0.0109 \\
Gaussian-Scoring & 0.0337 ± 0.0086 & 0.0330 ± 0.0057 & 0.0168 ± 0.0079 & 0.0826 ± 0.0327 & 0.0733 ± 0.0349 & 0.0443 ± 0.0105 & 0.0371 ± 0.0146 & 0.0691 ± 0.0120 \\
CQR & 0.0307 ± 0.0136 & 0.0384 ± 0.0082 & 0.0170 ± 0.0070 & 0.0783 ± 0.0282 & 0.0506 ± 0.0112 & 0.0270 ± 0.0099 & 0.0510 ± 0.0117 & 0.0762 ± 0.0035 \\
CQR-ALD & 0.0274 ± 0.0114 & 0.0301 ± 0.0131 & 0.0177 ± 0.0041 & 0.0476 ± 0.0268 & 0.0404 ± 0.0204 & 0.0328 ± 0.0127 & 0.0461 ± 0.0092 & 0.0776 ± 0.0051 \\
RCP & 0.0208 ± 0.0076 & 0.0446 ± 0.0083 & 0.0155 ± 0.0067 & 0.0353 ± 0.0184 & 0.0371 ± 0.0099 & 0.0329 ± 0.0108 & 0.0342 ± 0.0078 & 0.0574 ± 0.0052 \\
RCP-ALD & 0.0274 ± 0.0142 & 0.0373 ± 0.0100 & 0.0157 ± 0.0051 & 0.0557 ± 0.0343 & 0.0427 ± 0.0185 & 0.0309 ± 0.0084 & 0.0322 ± 0.0132 & 0.0495 ± 0.0059 \\
RCP-MultiHead & 0.0239 ± 0.0072 & 0.0484 ± 0.0082 & 0.0179 ± 0.0077 & 0.0366 ± 0.0138 & 0.0477 ± 0.0064 & 0.0323 ± 0.0084 & 0.0335 ± 0.0112 & 0.0573 ± 0.0070 \\
CPCP & 0.0195 ± 0.0097 & 0.0379 ± 0.0105 & 0.0129 ± 0.0047 & 0.0271 ± 0.0148 & 0.0166 ± 0.0062 & 0.0370 ± 0.0120 & 0.0295 ± 0.0076 & 0.0632 ± 0.0090 \\
CPCP (Clip) & 0.0228 ± 0.0103 & 0.0256 ± 0.0065 & 0.0148 ± 0.0066 & 0.0278 ± 0.0100 & 0.0189 ± 0.0065 & 0.0346 ± 0.0059 & 0.0242 ± 0.0070 & 0.0530 ± 0.0063 \\
CPCP (Mix) & 0.0196 ± 0.0083 & 0.0275 ± 0.0094 & 0.0135 ± 0.0048 & \textbf{0.0182 ± 0.0107} & 0.0158 ± 0.0054 & 0.0319 ± 0.0092 & 0.0219 ± 0.0070 & 0.0498 ± 0.0090 \\
CPCP (Clip+Mix) & 0.0178 ± 0.0076 & \textbf{0.0219 ± 0.0048} & 0.0116 ± 0.0067 & 0.0268 ± 0.0102 & 0.0155 ± 0.0061 & 0.0267 ± 0.0065 & 0.0182 ± 0.0050 & 0.0433 ± 0.0048 \\
CPCP-$0.01$ & 0.0225 ± 0.0082 & 0.0394 ± 0.0108 & 0.0170 ± 0.0070 & 0.0280 ± 0.0139 & 0.0153 ± 0.0057 & 0.0362 ± 0.0113 & 0.0308 ± 0.0092 & 0.0568 ± 0.0082 \\
CPCP-$0.01$ (Clip+Mix) & \textbf{0.0165 ± 0.0078} & 0.0230 ± 0.0081 & \textbf{0.0105 ± 0.0051} & 0.0202 ± 0.0113 & \textbf{0.0150 ± 0.0073} & \textbf{0.0257 ± 0.0065} & 0.0177 ± 0.0045 & 0.0464 ± 0.0065 \\
CPCP-$0.05$ & 0.0217 ± 0.0081 & 0.0375 ± 0.0129 & 0.0140 ± 0.0070 & 0.0250 ± 0.0094 & 0.0165 ± 0.0064 & 0.0337 ± 0.0081 & 0.0312 ± 0.0079 & 0.0575 ± 0.0125 \\
CPCP-$0.05$ (Clip+Mix) & 0.0169 ± 0.0082 & 0.0243 ± 0.0049 & 0.0110 ± 0.0053 & 0.0221 ± 0.0128 & 0.0152 ± 0.0051 & 0.0273 ± 0.0061 & \textbf{0.0175 ± 0.0056} & \textbf{0.0376 ± 0.0049} \\
\bottomrule
\end{tabular}
}
\end{small}
\end{center}
\vskip -0.1in
\end{table*}

\begin{table*}[ht]
\caption{Full comparison of $L_2$-ERT. $L_2$-ERT ($\downarrow$) on all benchmarks with all baselines.}
\label{tab:appendix_l2_ert}
\begin{center}
\begin{small}
\resizebox{\textwidth}{!}{
\begin{tabular}{lcccccccc}
\toprule
Method & Bike & Diamond & Gas Turbine & Naval & SGEMM & Superconductivity & Transcoding & WEC \\
\midrule
Split & 0.0060 ± 0.0015 & 0.0287 ± 0.0029 & 0.0044 ± 0.0008 & 0.0338 ± 0.0067 & 0.0219 ± 0.0028 & 0.0108 ± 0.0018 & 0.0163 ± 0.0020 & 0.0225 ± 0.0018 \\
PLCP (G=20) & 0.0030 ± 0.0023 & 0.0076 ± 0.0087 & 0.0011 ± 0.0011 & 0.0139 ± 0.0104 & 0.0139 ± 0.0087 & 0.0035 ± 0.0016 & 0.0059 ± 0.0051 & 0.0085 ± 0.0057 \\
PLCP (G=50) & 0.0022 ± 0.0019 & 0.0038 ± 0.0062 & 0.0011 ± 0.0014 & 0.0095 ± 0.0071 & 0.0077 ± 0.0094 & 0.0039 ± 0.0026 & 0.0031 ± 0.0014 & 0.0055 ± 0.0023 \\
Gaussian-Scoring & 0.0018 ± 0.0009 & 0.0024 ± 0.0009 & 0.0005 ± 0.0005 & 0.0127 ± 0.0130 & 0.0136 ± 0.0145 & 0.0024 ± 0.0012 & 0.0028 ± 0.0033 & 0.0079 ± 0.0035 \\
CQR & 0.0016 ± 0.0013 & 0.0024 ± 0.0009 & 0.0005 ± 0.0003 & 0.0104 ± 0.0069 & 0.0044 ± 0.0023 & 0.0007 ± 0.0014 & 0.0048 ± 0.0032 & 0.0084 ± 0.0011 \\
CQR-ALD & 0.0011 ± 0.0010 & 0.0016 ± 0.0017 & 0.0006 ± 0.0004 & 0.0049 ± 0.0075 & 0.0032 ± 0.0035 & 0.0013 ± 0.0016 & 0.0040 ± 0.0027 & 0.0125 ± 0.0025 \\
RCP & 0.0006 ± 0.0006 & 0.0029 ± 0.0011 & 0.0004 ± 0.0004 & 0.0023 ± 0.0020 & 0.0016 ± 0.0009 & 0.0014 ± 0.0015 & 0.0018 ± 0.0008 & 0.0048 ± 0.0011 \\
RCP-ALD & 0.0013 ± 0.0016 & 0.0022 ± 0.0011 & 0.0005 ± 0.0004 & 0.0072 ± 0.0110 & 0.0023 ± 0.0027 & 0.0012 ± 0.0010 & 0.0023 ± 0.0026 & 0.0039 ± 0.0013 \\
RCP-MultiHead & 0.0009 ± 0.0006 & 0.0036 ± 0.0011 & 0.0005 ± 0.0005 & 0.0019 ± 0.0012 & 0.0026 ± 0.0007 & 0.0012 ± 0.0009 & 0.0018 ± 0.0009 & 0.0045 ± 0.0016 \\
CPCP & 0.0006 ± 0.0006 & 0.0022 ± 0.0012 & 0.0003 ± 0.0002 & 0.0011 ± 0.0009 & 0.0004 ± 0.0003 & 0.0026 ± 0.0029 & 0.0015 ± 0.0007 & 0.0060 ± 0.0017 \\
CPCP (Clip) & 0.0008 ± 0.0006 & 0.0011 ± 0.0006 & 0.0003 ± 0.0003 & 0.0011 ± 0.0006 & 0.0005 ± 0.0003 & 0.0014 ± 0.0007 & 0.0010 ± 0.0005 & 0.0042 ± 0.0013 \\
CPCP (Mix) & 0.0005 ± 0.0005 & 0.0013 ± 0.0008 & 0.0003 ± 0.0002 & \textbf{0.0006 ± 0.0005} & \textbf{0.0003 ± 0.0002} & 0.0018 ± 0.0027 & 0.0008 ± 0.0005 & 0.0033 ± 0.0016 \\
CPCP (Clip+Mix) & 0.0004 ± 0.0005 & \textbf{0.0007 ± 0.0003} & \textbf{0.0002 ± 0.0002} & 0.0012 ± 0.0010 & \textbf{0.0003 ± 0.0003} & 0.0006 ± 0.0007 & \textbf{0.0005 ± 0.0003} & 0.0026 ± 0.0008 \\
CPCP-$0.01$ & 0.0007 ± 0.0006 & 0.0024 ± 0.0013 & 0.0005 ± 0.0004 & 0.0014 ± 0.0010 & \textbf{0.0003 ± 0.0003} & 0.0020 ± 0.0016 & 0.0015 ± 0.0008 & 0.0053 ± 0.0014 \\
CPCP-$0.01$ (Clip+Mix) & \textbf{0.0003 ± 0.0004} & 0.0009 ± 0.0007 & \textbf{0.0002 ± 0.0002} & 0.0007 ± 0.0005 & \textbf{0.0003 ± 0.0003} & 0.0005 ± 0.0008 & \textbf{0.0005 ± 0.0002} & 0.0033 ± 0.0012 \\
CPCP-$0.05$ & 0.0006 ± 0.0005 & 0.0022 ± 0.0017 & 0.0003 ± 0.0002 & 0.0009 ± 0.0006 & 0.0004 ± 0.0002 & 0.0018 ± 0.0014 & 0.0018 ± 0.0009 & 0.0053 ± 0.0027 \\
CPCP-$0.05$ (Clip+Mix) & \textbf{0.0003 ± 0.0005} & 0.0010 ± 0.0004 & \textbf{0.0002 ± 0.0002} & 0.0009 ± 0.0008 & \textbf{0.0003 ± 0.0002} & \textbf{0.0004 ± 0.0007} & \textbf{0.0005 ± 0.0003} & \textbf{0.0019 ± 0.0009} \\
\bottomrule
\end{tabular}
}
\end{small}
\end{center}
\vskip -0.1in
\end{table*}

\begin{table*}[ht]
\caption{Full comparison of Volume. Volume ($\downarrow$) on all benchmarks with all baselines.}
\label{tab:appendix_size}
\begin{center}
\begin{small}
\resizebox{\textwidth}{!}{
\begin{tabular}{lcccccccc}
\toprule
Method & Bike & Diamond & Gas Turbine & Naval & SGEMM & Superconductivity & Transcoding & WEC \\
\midrule
Split & 0.7661 ± 0.0222 & 0.3793 ± 0.0158 & 0.1794 ± 0.0202 & -0.2741 ± 0.2199 & -1.7916 ± 0.0618 & 0.9917 ± 0.0408 & -1.0367 ± 0.0727 & 1.3004 ± 0.0207 \\
PLCP (G=20) & 0.7528 ± 0.0303 & 0.3520 ± 0.0149 & 0.1468 ± 0.0253 & -0.7786 ± 0.2926 & -1.8795 ± 0.1089 & 0.8815 ± 0.0325 & -1.3365 ± 0.1637 & 1.0509 ± 0.1180 \\
PLCP (G=50) & 0.7419 ± 0.0571 & 0.3427 ± 0.0147 & 0.1529 ± 0.0305 & -0.7513 ± 0.3553 & -1.9906 ± 0.1200 & 0.8848 ± 0.0349 & -1.4444 ± 0.0921 & 1.0260 ± 0.0843 \\
Gaussian-Scoring & \textbf{-0.7455 ± 0.0498} & \textbf{-1.6312 ± 0.0612} & \textbf{0.0454 ± 0.0159} & 1.1504 ± 0.6506 & 1.1303 ± 1.5991 & \textbf{-0.6082 ± 0.0486} & -1.1826 ± 0.7100 & 1.6395 ± 0.2775 \\
CQR & 0.8118 ± 0.0441 & 0.2990 ± 0.0084 & 0.0603 ± 0.0181 & 0.0665 ± 0.3057 & -2.3191 ± 0.0515 & 0.8934 ± 0.0372 & \textbf{-2.4395 ± 0.1315} & \textbf{0.6992 ± 0.0189} \\
CQR-ALD & 1.0122 ± 0.0889 & 0.3170 ± 0.0344 & 0.0524 ± 0.0187 & 0.6268 ± 0.2757 & \textbf{-2.4332 ± 0.0748} & 0.9521 ± 0.0495 & -2.2226 ± 0.1615 & 0.9541 ± 0.0243 \\
RCP & 0.7402 ± 0.0638 & 0.3197 ± 0.0126 & 0.1580 ± 0.0288 & -0.9301 ± 0.2656 & -2.0543 ± 0.0667 & 0.9009 ± 0.0842 & -1.5827 ± 0.0699 & 0.7489 ± 0.0249 \\
RCP-ALD & 0.7380 ± 0.0415 & 0.3219 ± 0.0125 & 0.1406 ± 0.0227 & -0.8320 ± 0.3071 & -2.0520 ± 0.0795 & 0.8611 ± 0.0318 & -1.5375 ± 0.0824 & 0.7571 ± 0.0292 \\
RCP-MultiHead & 0.7339 ± 0.0269 & 0.3156 ± 0.0080 & 0.1546 ± 0.0242 & -1.0368 ± 0.2576 & -2.0542 ± 0.0608 & 0.9002 ± 0.0713 & -1.5889 ± 0.0894 & 0.7492 ± 0.0258 \\
CPCP & 0.7423 ± 0.0468 & 0.3723 ± 0.0420 & 0.1456 ± 0.0386 & -0.9894 ± 0.3102 & -2.0635 ± 0.0782 & 0.9247 ± 0.1059 & -1.4728 ± 0.1361 & 0.9208 ± 0.0541 \\
CPCP (Clip) & 0.7512 ± 0.0532 & 0.3459 ± 0.0137 & 0.1555 ± 0.0334 & -0.8944 ± 0.3215 & -2.0523 ± 0.0568 & 0.8767 ± 0.0333 & -1.5072 ± 0.0840 & 0.8472 ± 0.0457 \\
CPCP (Mix) & 0.7370 ± 0.0433 & 0.3566 ± 0.0317 & 0.1380 ± 0.0308 & -1.0269 ± 0.2389 & -2.0764 ± 0.0482 & 0.9123 ± 0.0669 & -1.5034 ± 0.1004 & 0.8262 ± 0.0469 \\
CPCP (Clip+Mix) & 0.7486 ± 0.0330 & 0.3381 ± 0.0120 & 0.1511 ± 0.0423 & -0.9101 ± 0.1810 & -2.0793 ± 0.0402 & 0.8778 ± 0.0236 & -1.5591 ± 0.0597 & 0.7697 ± 0.0461 \\
CPCP-$0.01$ & 0.7483 ± 0.0492 & 0.3928 ± 0.1133 & 0.1582 ± 0.0478 & -0.9517 ± 0.2808 & -2.0613 ± 0.0403 & 0.9197 ± 0.0769 & -1.4551 ± 0.0713 & 0.9045 ± 0.0804 \\
CPCP-$0.01$ (Clip+Mix) & 0.7553 ± 0.0373 & 0.3437 ± 0.0132 & 0.1501 ± 0.0358 & -0.9692 ± 0.2858 & -2.0506 ± 0.0754 & 0.8669 ± 0.0286 & -1.5704 ± 0.0712 & 0.7614 ± 0.0477 \\
CPCP-$0.05$ & 0.7471 ± 0.0400 & 0.4128 ± 0.1902 & 0.1439 ± 0.0343 & \textbf{-1.0389 ± 0.2635} & -2.0583 ± 0.0536 & 0.8895 ± 0.0424 & -1.4253 ± 0.1310 & 0.8521 ± 0.1029 \\
CPCP-$0.05$ (Clip+Mix) & 0.7473 ± 0.0445 & 0.3340 ± 0.0118 & 0.1347 ± 0.0244 & -1.0265 ± 0.2652 & -2.0449 ± 0.0639 & 0.8724 ± 0.0329 & -1.5644 ± 0.0611 & 0.7037 ± 0.0349 \\
\bottomrule
\end{tabular}
}
\end{small}
\end{center}
\vskip -0.1in
\end{table*}

\clearpage


\section{Proofs}

\subsection{Proof of Proposition \ref{prop:pointwise_bound_lipschitz}}\label{app:proof_of_pointwise}

We fix $x$ throughout the proof and suppress the dependence on $x$ for notational simplicity.
Let $F(\cdot)=F_{S \mid X=x}(\cdot)$ denote the conditional cumulative distribution function of the score
$S=s(x,Y)$, and let
\[
\mathcal{L}(q)
=
\mathbb{E}\bigl[\rho_\tau(q,S)\mid X=x\bigr]
\]
denote the corresponding expected pinball loss.
Let $q^\star=q_\tau(x)$ be the true $\tau$-quantile, i.e., $F(q^\star)=\tau$, and let $\hat q=\hat q_\tau(x)$.

It is well known that $\mathcal{L}(\cdot)$ is a convex and absolutely continuous function of $q$.
Moreover, for almost every $q \in \mathbb{R}$,
\begin{equation}
\label{eq:pinball_derivative}
\frac{d}{dq}\mathcal{L}(q)
=
F(q)-\tau.
\end{equation}
Since $q^\star$ is a minimizer of $\mathcal{L}$, we have $F(q^\star)-\tau=0$.

By the fundamental theorem of calculus for absolutely continuous functions,
the excess risk can therefore be written as
\begin{equation}
\label{eq:excess_risk_integral}
\mathcal{L}(\hat q)-\mathcal{L}(q^\star)
=
\int_{q^\star}^{\hat q}\bigl(F(z)-\tau\bigr)dz .
\end{equation}

We now use the Lipschitz assumption on the conditional CDF.
Since $F$ is $L_F$-Lipschitz, it is almost everywhere differentiable with
density $f(z)=F'(z)$ satisfying $f(z)\le L_F$ for almost every $z$.

Without loss of generality, assume $\hat q \ge q^\star$ (the case $\hat q<q^\star$ follows by symmetry).
Define $g(z)=F(z)-\tau$, so that $g(q^\star)=0$, $g(z)\ge 0$ for $z\ge q^\star$,
and $g'(z)=f(z)\le L_F$ almost everywhere.

Consider the auxiliary function
\[
H(q)
=
\int_{q^\star}^{q} g(z)dz
-
\frac{1}{2L_F} g(q)^2,
\qquad q\ge q^\star.
\]
For almost every $q\ge q^\star$, we have
\[
H'(q)
=
g(q)\Bigl(1-\frac{g'(q)}{L_F}\Bigr)
=
g(q)\Bigl(1-\frac{f(q)}{L_F}\Bigr)
\ge0,
\]
where the inequality follows from $g(q)\ge 0$ and $f(q)\le L_F$.
Since $H(q^\star)=0$ and $H$ is non-decreasing on $[q^\star,\infty)$, it follows that
$H(\hat q)\ge 0$, i.e.,
\begin{equation}
\label{eq:key_lower_bound}
\int_{q^\star}^{\hat q} \bigl(F(z)-\tau\bigr)dz
\ge
\frac{1}{2L_F}\bigl(F(\hat q)-\tau\bigr)^2 .
\end{equation}

Combining \eqref{eq:excess_risk_integral} and \eqref{eq:key_lower_bound} yields
\[
\mathcal{L}(\hat q)-\mathcal{L}(q^\star)
\ge
\frac{1}{2L_F}\bigl(F(\hat q)-\tau\bigr)^2,
\]
which implies
\[
\bigl|F(\hat q)-\tau\bigr|
\le
\sqrt{2L_F\bigl(\mathcal{L}(\hat q)-\mathcal{L}(q^\star)\bigr)}.
\]
This completes the proof.
\qed

\subsection{Proof of Proposition \ref{prop:our_taylor_approx}}\label{proof:our_taylor_approx}

We aim to characterize the approximation error between the squared conditional coverage mismatch and the density-weighted excess risk. Let $\epsilon_q(x) \coloneqq \hat{q}_\tau(x) - q_\tau(x)$ denote the estimation error.

\paragraph{Expansion of the Squared Coverage Error.}
Let $G(u) \coloneqq (F_{S \mid X}(u) - \tau)^2$. We seek the Taylor expansion of $G(\hat{q}_\tau(x))$ around the true quantile $q_\tau(x)$. Note that $F_{S \mid X}(q_\tau(x)) = \tau$. We compute the derivatives of $G(u)$ evaluated at $u = q_\tau(x)$:

\begin{itemize}
    \item First derivative:
    \begin{equation}
        G'(u) = 2(F_{S \mid X}(u) - \tau)f_{S \mid X}(u) \implies G'(q_\tau(x)) = 0.
    \end{equation}
    \item Second derivative:
    \begin{equation}
        G''(u) = 2f_{S \mid X}(u)^2 + 2(F_{S \mid X}(u) - \tau)f'_{S \mid X}(u) \implies G''(q_\tau(x)) = 2f_{S \mid X}(q_\tau(x))^2.
    \end{equation}
    \item Third derivative:
    \begin{equation}
        G'''(u) = 6f_{S \mid X}(u)f'_{S \mid X}(u) + 2(F_{S \mid X}(u) - \tau)f''_{S \mid X}(u).
    \end{equation}
\end{itemize}

Applying Taylor expansion with the Lagrange remainder to the third order:
\begin{equation}
\begin{aligned}
    (F_{S \mid X}(\hat{q}_\tau(x)) - \tau)^2 &= G(q_\tau(x)) + G'(q_\tau(x))\epsilon_q(x) + \frac{1}{2}G''(q_\tau(x))\epsilon_q(x)^2 + \frac{1}{6}G'''(\xi_{S,1})\epsilon_q(x)^3 \\
    &= f_{S \mid X}(q_\tau(x))^2 \epsilon_q(x)^2 + \frac{1}{6}G'''(\xi_{S,1})\epsilon_q(x)^3,
\end{aligned}
\end{equation}
where $\xi_{S,1}$ lies between $\hat{q}_\tau(x)$ and $q_\tau(x)$.

\paragraph{Expansion of the Excess Risk.}
Recall the excess risk $\mathcal{E}(x) \coloneqq \mathcal{L}_x(\hat{q}_\tau(x)) - \mathcal{L}_x(q_\tau(x))$, where $\mathcal{L}_x(\cdot)$ is the expected pinball loss. The derivatives of $\mathcal{L}_x$ at $q_\tau(x)$ are:
\begin{itemize}
    \item $\mathcal{L}'_x(q_\tau(x)) = F_{S \mid X}(q_\tau(x)) - \tau = 0$.
    \item $\mathcal{L}''_x(q_\tau(x)) = f_{S \mid X}(q_\tau(x))$.
    \item $\mathcal{L}'''_x(u) = f'_{S \mid X}(u)$.
\end{itemize}

Applying Taylor's theorem to $\mathcal{E}(x)$ up to the third order:
\begin{equation}
\begin{aligned}
    \mathcal{E}(x) &= \frac{1}{2}\mathcal{L}''_x(q_\tau(x))\epsilon_q(x)^2 + \frac{1}{6}\mathcal{L}'''_x(\xi_{S,2})\epsilon_q(x)^3 \\
    &= \frac{1}{2}f_{S \mid X}(q_\tau(x))\epsilon_q(x)^2 + \frac{1}{6}f'_{S \mid X}(\xi_{S,2})\epsilon_q(x)^3,
\end{aligned}
\end{equation}
where $\xi_{S,2}$ lies between $\hat{q}_\tau(x)$ and $q_\tau(x)$.

Multiplying by $2f_{S \mid X}(q_\tau(x))$ to match the leading term of the squared coverage error:
\begin{equation}
    2f_{S \mid X}(q_\tau(x))\mathcal{E}(x) = f_{S \mid X}(q_\tau(x))^2 \epsilon_q(x)^2 + \frac{1}{3}f_{S \mid X}(q_\tau(x))f'_{S \mid X}(\xi_{S,2})\epsilon_q(x)^3.
\end{equation}

\paragraph{Characterizing the Approximation Gap.}
Subtracting the weighted excess risk from the squared coverage error, the second-order terms cancel out perfectly:
\begin{equation}
\begin{aligned}
    &\left(F_{S \mid X}(\hat{q}_\tau(x)) - \tau\right)^2 - 2f_{S \mid X}(q_\tau(x))\mathcal{E}(x) \\
    &= \left[ f_{S \mid X}(q_\tau(x))^2 \epsilon_q(x)^2 + \frac{1}{6}G'''(\xi_{S,1})\epsilon_q(x)^3 \right] - \left[ f_{S \mid X}(q_\tau(x))^2 \epsilon_q(x)^2 + \frac{1}{3}f_{S \mid X}(q_\tau(x))f'_{S \mid X}(\xi_{S,2})\epsilon_q(x)^3 \right] \\
    &= \frac{1}{6}\epsilon_q(x)^3 \left[ G'''(\xi_{S,1}) - 2f_{S \mid X}(q_\tau(x))f'_{S \mid X}(\xi_{S,2}) \right].
\end{aligned}
\end{equation}

Substituting the expression for $G'''(\xi_{S,1})$:
\begin{equation}
    \text{Gap}(x) = \frac{1}{6}\epsilon_q(x)^3 \left[ 6f(\xi_{S,1})f'(\xi_{S,1}) + 2(F(\xi_{S,1})-\tau)f''(\xi_{S,1}) - 2f(q_\tau(x))f'(\xi_{S,2}) \right],
\end{equation}
where we abbreviated $f_{S|X}$ as $f$ and $F_{S|X}$ as $F$.

We now introduce the necessary regularity assumptions as follows.

\begin{assumption}[Regularity assumption on smoothness of $f_{S\mid X}$]\label{ass:regularity_on_smoothness_of_f}
    We assume that $f_{S\mid X}$ is twice continuous differentiable, and there exist constants $B_w, B_f, B_{f'}, B_{f''}$ such that:
    \begin{equation}
        |f(q(x))|\leq B_w, \quad |f(\xi)| \leq B_f, \quad |f^\prime(\xi)| \leq B_{f'}, \quad |f''(\xi)| \leq B_{f''},        
    \end{equation}
    such that the following bounds hold for each $x\in\mathcal{X}$, $\xi \in \mathcal{S}$, where $\mathcal{S}$ is the space of nonconformity scores.  
\end{assumption}
Here, we add a refined constant $B_w$ because it is the bound on true weights that will appear again in Theorem~\ref{thm:main_theorem}.

With Assumption~\ref{ass:regularity_on_smoothness_of_f}, we have:
$$
\left|\left(F_{S \mid X}(\hat{q}_\tau(x)) - \tau\right)^2 - 2f_{S \mid X}(q_\tau(x))\mathcal{E}(x)\right| \leq 
C_{f}\epsilon_q(x)^3,
$$
where  
$$
C_f = \left(6B_fB_{f'} + 2\max\{\tau,1-\tau\} B_{f''} + 2B_w B_{f'}\right).
$$
This confirms that minimizing the density-weighted pinball loss is a highly accurate surrogate for minimizing the MSQE, with a negligible approximation gap in the regime of consistent quantile estimation. 

\qed


\subsection{Proof of Theorem \ref{thm:main_theorem}}\label{app:proof_of_main_theorem}

\subsubsection{Preliminary: fast rates via local Rademacher complexity}
\label{sec:proof_prelim}

We emphasize that Assumption~\ref{ass:regularity} in the main text
isolates only the geometric and smoothness conditions that determine
the constants appearing in the excess risk bound.
The fast convergence of the auxiliary quantile estimators
is established separately via standard local Rademacher complexity
arguments, which we briefly summarize below for completeness.

\paragraph{Auxiliary quantile estimation error.}
Recall that the quality of the estimated weights depends on the accuracy
of the auxiliary quantile estimators.
Define
\[
\mathcal{E}_q(n)
\coloneqq
\sup_{\beta \in \{\tau-\delta,\tau+\delta\}}
\bigl\|
\hat q_\beta - q_\beta
\bigr\|_{L_2(\mathbb P_X)} .
\]
Without additional curvature assumptions, global complexity bounds
typically yield the slow rate
$\mathcal{E}_q(n) = O_{\mathbb P}(n^{-1/4})$.
To obtain fast rates, we invoke the theory of local Rademacher complexity
\citep{bartlett2005local}.

\paragraph{Localization.}
For $r>0$, define the localized hypothesis class
\[
\mathcal G_r
\coloneqq
\bigl\{
q \in \mathcal G :
\| q - q^\star \|_{L_2(\mathbb P_X)} \le r
\bigr\}.
\]
All subsequent arguments are restricted to $\mathcal G_r$,
where $r$ is chosen according to the fixed-point equation associated
with the local Rademacher complexity.

\paragraph{Bernstein condition.}
Fast-rate results require the loss to satisfy a Bernstein (variance)
condition.
This property follows from the curvature of the pinball risk.

\begin{enumerate}
    \item \textbf{Quadratic growth.}
    By Assumption~\ref{ass:regularity} (density lower bound),
    there exists $b_w>0$ such that
    $f_{S|X}(q_\tau(x)\mid x)\ge b_w$ for all $x\in\mathcal X$.
    Standard arguments for quantile regression imply that the expected
    pinball risk satisfies the local quadratic growth condition
    \[
    \mathcal R_{\mathrm{pin}}(q) - \mathcal R_{\mathrm{pin}}(q^\star)
    \ge
    \frac{b_w}{2}
    \| q - q^\star \|_{L_2(\mathbb P_X)}^2 ,
    \qquad
    \forall q \in \mathcal G_r .
    \]
    \item \textbf{Variance control.}
    The pinball loss $\rho_\tau$ is $L_\rho$-Lipschitz in its prediction
    argument, with $L_\rho=\max(\tau,1-\tau)$.
    Consequently,
    \[
    \bigl|
    \rho_\tau(q,S)-\rho_\tau(q^\star,S)
    \bigr|
    \le
    L_\rho |q-q^\star| .
    \]
    This yields
    \[
    \operatorname{Var}
    \!\left(
    \rho_\tau(q,S)-\rho_\tau(q^\star,S)
    \right)
    \le
    L_\rho^2
    \| q-q^\star \|_{L_2(\mathbb P_X)}^2 .
    \]
    Combining with the quadratic growth inequality gives the Bernstein
    condition
    \begin{equation}
    \label{eq:bernstein_condition}
    \operatorname{Var}
    \!\left(
    \rho_\tau(q,S)-\rho_\tau(q^\star,S)
    \right)
    \le
    B_{\mathrm{var}}
    \bigl(
    \mathcal R_{\mathrm{pin}}(q)
    -\mathcal R_{\mathrm{pin}}(q^\star)
    \bigr),
    \qquad
    B_{\mathrm{var}}=\frac{2L_\rho^2}{b_w}.
    \end{equation}
\end{enumerate}

\paragraph{Fast-rate consequence.}
Under the Bernstein condition~\eqref{eq:bernstein_condition} and standard
sub-root assumptions on the local Rademacher complexity
(e.g., $\mathfrak R_n(\mathcal G_r)\lesssim r\sqrt{d/n}$ for VC-type classes),
the fixed-point equation
\[
\mathfrak R_n(\mathcal G_{r^\star}) \asymp r^{\star 2}
\]
yields $r^\star \asymp \sqrt{d/n}$.
As a result, the ERM-based auxiliary quantile estimators satisfy
\[
\mathcal{E}_q(n)
=
\|\hat q_\beta - q_\beta\|_{L_2(\mathbb P_X)}
\le
C_{\mathrm{fast}}
\mathfrak R_n(\mathcal G)
=
O_{\mathbb P}(n^{-1/2}),
\]
uniformly over $\beta\in\{\tau-\delta,\tau+\delta\}$.

\subsubsection{Error decomposition}
We define the following empirical risks:
\begin{itemize}
    \item $\mathcal{R}_n(g) \coloneqq \frac{1}{n}\sum_{i=1}^n w(x_i) \rho_{\tau}(g(x_i), s_i)$ denotes the empirical risk with \textit{true} weights $w$.
    \item $\hat{\mathcal{R}}_n(g) \coloneqq \frac{1}{n}\sum_{i=1}^n \hat{w}(x_i) \rho_{\tau}(g(x_i), s_i)$ denotes the empirical risk with \textit{estimated} weights $\hat{w}$.
\end{itemize}
Using a standard empirical-process argument, we control the excess risk as:
\begin{align}
    \mathcal{R}(\hat{g}) - \mathcal{R}(g^\star) &\leq 2 \sup_{g \in \mathcal{G}} | \hat{\mathcal{R}}_n(g) - \mathcal{R}(g) | \\
    &\leq 2 \underbrace{\sup_{g \in \mathcal{G}} | \mathcal{R}_n(g) - \mathcal{R}(g) |}_{\text{Term I}} + 2 \underbrace{\sup_{g \in \mathcal{G}} | \hat{\mathcal{R}}_n(g) - \mathcal{R}_n(g) |}_{\text{Term II}}.
\end{align}

\subsubsection{Bounding the stochastic error (Term I)}

Term~I corresponds to the generalization error with fixed (oracle) weights.
To avoid imposing a uniform boundedness assumption on the loss,
we introduce the following conditional sub-Gaussian assumption.

\begin{assumption}[Conditional sub-Gaussian noise]
\label{ass:subgaussian}
The nonconformity score $S$ is conditionally sub-Gaussian given $X$.
Specifically, there exists a constant $\sigma_S>0$ such that
for all $\lambda \in \mathbb R$,
\begin{equation}
    \mathbb E\!\left[
    \exp\!\left(
    \lambda \bigl(S - \mathbb E[S \mid X]\bigr)
    \right)
    \mid X
    \right]
    \leq
    \exp\!\left(
    \frac{\sigma_S^2 \lambda^2}{2}
    \right)
    \quad \text{a.s.}
\end{equation}
\end{assumption}

By Assumption~\ref{ass:regularity}.3, the weight function satisfies
$w(x) \leq B_w$ almost surely.
Moreover, the pinball loss $\rho_\tau$ is $L_\rho$-Lipschitz in its prediction
argument, where $L_\rho = \max(\tau,1-\tau)$.
Therefore, the weighted loss
\[
(x,s) \mapsto w(x)\rho_\tau(g(x),s)
\]
is $B_w L_\rho$-Lipschitz with respect to the prediction argument $g(x)$.

Under the conditional sub-Gaussian assumption in
Assumption~\ref{ass:subgaussian},
the random variable
$w(X)\rho_\tau(g(X),S)$
is conditionally sub-exponential with parameter
$B_w L_\rho \sigma_S$.
Applying standard global Rademacher complexity bounds together with
Bernstein-type concentration inequalities yields that,
with probability at least $1-\zeta$,
\begin{equation}
\label{eq:termI_bound}
\operatorname{Term~I}
\;\le\;
2 B_w L_\rho \, \mathfrak{R}_n(\mathcal{G})
\;+\;
B_w L_\rho \sigma_S
\sqrt{\frac{2\log (1 / \zeta)}{n}}.
\end{equation}

\subsubsection{Bounding the weight estimation error (Term II)}
This term captures the error arising from using estimated weights $\hat{w}$ instead of true weights $w$. 
We bound this difference uniformly over the hypothesis class $\mathcal{G}$ by applying the Cauchy-Schwarz inequality. The derivation proceeds as follows:
\begin{equation}
\begin{aligned}
    \text{Term II} 
    &\coloneqq \sup_{g \in \mathcal{G}} | \hat{\mathcal{R}}_n(g) - \mathcal{R}_n(g) | \\
    &= \sup_{g \in \mathcal{G}} \left| \frac{1}{n} \sum_{i=1}^n \hat{w}(x_i) \rho_{\tau}(g(x_i), s_i) - \frac{1}{n} \sum_{i=1}^n w(x_i) \rho_{\tau}(g(x_i), s_i) \right| \\
    &= \sup_{g \in \mathcal{G}} \left| \frac{1}{n} \sum_{i=1}^n (\hat{w}(x_i) - w(x_i)) \rho_{\tau}(g(x_i), s_i) \right| \\
    &\leq \sup_{g \in \mathcal{G}} \left( \sqrt{\frac{1}{n} \sum_{i=1}^n (\hat{w}(x_i) - w(x_i))^2} \cdot \sqrt{\frac{1}{n} \sum_{i=1}^n \rho_{\tau}(g(x_i), s_i)^2} \right) \\
    &= \left( \frac{1}{n} \sum_{i=1}^n (\hat{w}(x_i) - w(x_i))^2 \right)^{1/2} \cdot \sup_{g \in \mathcal{G}} \left( \frac{1}{n} \sum_{i=1}^n \rho_{\tau}(g(x_i), s_i)^2 \right)^{1/2} \\
    &= \|\hat{w} - w\|_{2,n} \cdot M_{\rho, \mathcal{G}},
\end{aligned}
\end{equation}
where $M_{\rho, \mathcal{G}} \coloneqq \sup_{g \in \mathcal{G}} (\mathbb{E}_n[\rho_\tau(g(X), S)^2])^{1/2}$ represents the maximal empirical root-mean-square (RMS) of the loss function over the hypothesis class, and $\|\cdot\|_{2,n}$ represents the empirical $L_2$-norm.

For notational convenience, we define:
\begin{equation}
    \begin{aligned}
        \Delta_\delta(x) &\coloneqq \frac{q_{\tau+\delta}(x) - q_{\tau-\delta}(x)}{2\delta} \\ 
        \hat\Delta_\delta(x) &\coloneqq \frac{\hat q_{\tau+\delta}(x) - \hat q_{\tau-\delta}(x)}{2\delta},
    \end{aligned}    
\end{equation}
and 
\begin{equation}
    w_\delta(x) \coloneqq \frac{1}{\Delta_\delta(x)}
\end{equation}
as the population finite-difference weight approximation.

We decompose the weight RMSE $\|\hat{w} - w\|_{L_2}$ into bias (finite difference approximation) and variance (estimation error):
\begin{equation}
    \|\hat{w} - w\|_{2,n} \leq \|w_\delta - w\|_{2,n} + \|\hat{w} - w_\delta\|_{2,n} .
\end{equation}

\paragraph{Bias analysis.}
We aim to bound the bias $\|w_\delta - w\|_{2,n}$ introduced by the central finite-difference approximation pointwise. 
Recall that the target weight is the reciprocal of the quantile density function (sparsity): $w(x) = 1/q'_\tau(x)$.

First, we analyze the error of the central difference estimator $\Delta_\delta(x)$. 
Fix $x$ and perform a Taylor expansion with Lagrange remainder of the quantile function $q_\beta(x)$ with respect to $\beta$ around $\tau$:
\begin{align}
    q_{\tau+\delta}(x) &= q_\tau(x) + \delta q'_\tau(x) + \frac{\delta^2}{2} q''_\tau(x) + \frac{\delta^3}{6} q'''_{\xi_1}(x), \\
    q_{\tau-\delta}(x) &= q_\tau(x) - \delta q'_\tau(x) + \frac{\delta^2}{2} q''_\tau(x) - \frac{\delta^3}{6} q'''_{\xi_2}(x),
\end{align}
where $\xi_1 \in (\tau, \tau+\delta)$ and $\xi_2 \in (\tau-\delta, \tau)$. We clarify that we fix $x$ and use $q'_\tau(x)$ to denote the derivative of $q_\tau(x)$ w.r.t. $\tau$.

Subtracting the two equations cancels the quadratic terms (second derivatives), yielding:
\begin{equation}
    q_{\tau+\delta}(x) - q_{\tau-\delta}(x) = 2\delta q'_\tau(x) + \frac{\delta^3}{6} (q'''_{\xi_1}(x) + q'''_{\xi_2}(x)).
\end{equation}
Dividing by $2\delta$, the error in the derivative estimation is bounded by the third derivative:
\begin{equation}
    |\Delta_\delta(x) - q'_\tau(x)| = \left| \frac{\delta^2}{12} (q'''_{\xi_1}(x) + q'''_{\xi_2}(x)) \right| \leq \frac{\delta^2}{6} \sup_{\beta} |q'''_\beta(x)| \leq \frac{B_q'''}{6} \delta^2.
\end{equation}
Here, we used Assumption \ref{ass:regularity} (1) which bounds the third derivative of the quantile function by $B_q'''$.

Next, we leverage Assumption \ref{ass:regularity} (2) ($1/q'_\tau(x) \leq B_w$) and to make the discussion meaningful, we must assume $\delta$ is small enough ($\delta\leq \sqrt{3/B_w B_q'''}$), so that:
\begin{equation}
    \Delta_\delta(x) \geq q'_\tau(x) - |\Delta_\delta(x) - q'_\tau(x)| \geq \frac{1}{B_w} - \frac{B_q'''}{6} \delta^2 \geq \frac{1}{2B_w}
\end{equation}

Therefore, we have 
\begin{equation}
    \begin{aligned}
        |w_\delta(x) - w(x)|                
        &= \frac{|q'_\tau(x) - \Delta_\delta(x)|}{\Delta_\delta(x)q'_\tau(x)}        
        \\ 
        &\leq 
        \frac{B_w^2 B_q'''\delta^2}{3},
    \end{aligned}    
\end{equation}
and naturally, 
\begin{equation}
    \|w_\delta - w\|_{2,n} \leq \frac{B_w^2 B_q'''\delta^2}{3}.
\end{equation}

\paragraph{Variance analysis.}
We aim to bound the variance term, $\|\hat{w} - w_\delta\|_{2,n}$. The problem that cannot be steered around when using estimated reciprocal weights is the uniform lower bound for $\hat\Delta_\delta(x)$, where the uniform range is both for $x\in\mathcal{X}$ and function $\hat q \in \mathcal{G}$. Here, we try to weaken the needed conditions and utilize the assumption on relation between infinity norm and $L_2$-norm, locally.


\begin{assumption}[Norm equivalence via H\"older regularity]
\label{ass:norm_comparison}
Let $\mathcal X \subset \mathbb R^d$ be a compact domain with nonempty interior,
and let $P_X$ be a distribution supported on $\mathcal X$
with density bounded away from zero and infinity. Assume that there exists a smoothness index $s>0$ and a constant $R>0$ such that for all $g \in \mathcal G_r$, the difference $g-g^\star$ belongs to the H\"older class $C^s(\mathcal X)$ with
\begin{equation*}
   \|g - g^\star \|_{C^s(\mathcal X)} \le R. 
\end{equation*}
Then there exists a constant $C_{\mathrm{H}}>0$, depending only on $(s,d,\mathcal X,P_X)$, such that for all $g \in \mathcal G_r$,
\begin{equation}
\label{eq:holder_norm_comparison}
\| g - g^\star \|_{L_\infty(\mathcal X)}
\le
C_{\mathrm{H}}
R^{\frac{d}{2s+d}}
\| g - g^\star \|_{L_2(P_X)}^{\frac{2s}{2s+d}} .
\end{equation}
\end{assumption}

We note that we should consider two $g^\star$s, i.e., $q_\beta$ for $\beta\in\{\tau\pm\delta\}$. For notational convenience, let $(C_{\mathrm{norm}},\nu)$ be the constants in Assumption \ref{ass:regularity}.
When Assumption \ref{ass:regularity}.3 is verified via the Hölder-type norm interpolation in Lemma~\ref{lem:holder_norm_comparison},
one may take $C_{\mathrm{norm}}=C_H R^{d/(2s+d)}$ and $\nu_H = 2s/(2s+d)$.
To avoid confusion, we keep $\nu$ for the abstract exponent in Assumption 5.1.3 and use $\nu_H$
only for this Hölder instantiation.


\begin{remark}
\label{rem:relu_holder}
For classic statistical regression models, e.g., parametric model or Reproducing Kernel Hilbert Space (RKHS) regression with smooth kernel (e.g., Gaussian kernel), we have $\nu\approx 1$. 
\end{remark}

With Assumption~\ref{ass:norm_comparison} at hand, we can construct the uniform lower bound on $\hat w(x)$ with probability at least $1-2\zeta$:
\begin{equation}
    \begin{aligned}
        \hat\Delta_\delta(x) 
        &\geq \Delta_\delta(x) - |\Delta_\delta(x) - \hat\Delta_\delta(x)|
        \\ 
        &\geq \Delta_\delta(x) - \frac{1}{2\delta}(|\hat q_{\tau+\delta}(x) - q_{\tau+\delta}(x)| + |\hat q_{\tau-\delta}(x) - q_{\tau-\delta}(x)|)
        \\
        &\geq \frac{1}{2B_w} - \frac{1}{2\delta}(\|\hat q_{\tau+\delta} - q_{\tau+\delta}\|_{L_\infty} + \|\hat q_{\tau-\delta} - q_{\tau-\delta}\|_{L_\infty})
        \\ 
        &\geq \frac{1}{4B_w}, 
    \end{aligned}    
\end{equation}
For the last inequality, we actually need:
\begin{equation}
    C_{\text{norm}}\left(C_{\text{fast}}\mathfrak{R}_n(\mathcal{G})\right)^\nu \leq \frac{\delta}{4B_w}.
\end{equation}
In fact, our optimal rate is given by $\delta^{\star} \asymp \mathfrak{R}_n(\mathcal G)^{1/3}$, and thus we need $\nu\geq 1/3$, otherwise this stability condition would induce a slower rate. In the Hölder instantiation of Lemma~\ref{lem:holder_norm_comparison} where $\nu=\nu_H=2s/(2s+d)$, the condition $\nu\ge 1/3$
is equivalent to $d\le 4s$, reflecting the standard curse of dimensionality. We remark that this bottleneck motivates our clipping mechanism on the weights.

Thus, we have:
\begin{equation}
        \|\hat{w} - w_\delta\|_{2,n} 
        =  
        \left\|\frac{\Delta_\delta -  \hat\Delta_\delta}{\hat\Delta_\delta\Delta_\delta}\right\|_{2,n}         
        \leq 8B_w^2 \|\hat\Delta_\delta -  \Delta_\delta\|_{2,n}                
\end{equation}

To avoid a rate reduction, instead of directly bounding the gap through Assumption~\ref{ass:norm_comparison}, we choose to transform the empirical norm to population norm using a concentration inequality. Specifically, we define the random variable:
\begin{equation}
    Z(X) = \left(2\delta(\hat{\Delta}_\delta(X) - \Delta_\delta(X))\right)^2.
\end{equation}

Since we consider $\hat q_\beta\in \mathcal{G}$, we can apply Assumption~\ref{ass:norm_comparison} and control the variance of $Z$ as:
\begin{equation}
    \operatorname{Var}(Z) 
    \leq \mathbb{E}[Z^2] 
    \leq \|Z\|_{L_\infty(\mathcal{X})} \mathbb{E}[Z]
    \leq C_{\text{norm}}^2 \left\|\hat{\Delta}_\delta-\Delta_\delta\right\|_{L_2}^{2+2\nu}.
\end{equation}
Moreover, $Z$ is also bounded, because:
\begin{equation}
    |Z(x)| 
    \leq 
    \|Z\|_{L_\infty(\mathcal{X})}^2 
    \leq 
    C_{\text{norm}}^2 \left\|\hat{\Delta}_\delta-\Delta_\delta\right\|_{L_2}^{2\nu} 
    \leq 
    C_{\text{norm}}^2 \mathcal{E}_q(n)^{2\nu}
\end{equation}

Hence, we can apply Bernstein's inequality~\citep{bach2024learning} to transform the empirical norm into population norm, with probability at least $1-\zeta$:
\begin{equation}
    \begin{aligned}            
    \|2\delta(\hat{\Delta}_\delta - \Delta_\delta)\|_{2,n}^2 
    &\leq 
    \|2\delta(\hat{\Delta}_\delta - \Delta_\delta)\|_{L_2}^2 
    + 
    \sqrt{\frac{2\left(C_{\text{norm}}^2 \left\|\hat{\Delta}_\delta-\Delta_\delta\right\|_{L_2}^{2+2\nu}\right) 
    \log(1/\zeta)}{n}} 
    + 
    \frac{2 C_{\text{norm}}^2\mathcal{E}_q(n)^{2\nu} \log(1/\zeta)}{3n}.
    \\ 
    &\leq 4 \mathcal{E}_q(n)^2 +  C_{\text{norm}}\mathcal{E}_q(n)^{1+\nu} \sqrt{\frac{2^{3+2\nu}\log (1/\zeta)}{n}} + \frac{2 C_{\text{norm}}^2\mathcal{E}_q(n)^{2\nu} \log(1/\zeta)}{3n}.
    \end{aligned}
\end{equation}

\subsubsection{Optimal bandwidth and final bound}\label{app:final_bound}

Combining the bias and variance terms derived above, the error Term II can be written as $U(\delta)$:
\[
U(\delta) = M_{\rho, \mathcal{G}}(c_1 \delta^2 + c_2 \delta^{-1}),
\]
where
\begin{equation}
\label{eq:def_C1}
c_1 = \frac{B_w^2 B_q'''}{3},
\end{equation}
and
\begin{equation}
\label{eq:def_C2}
c_2
=
4 B_w^2
\sqrt{
4 \mathcal{E}_q(n)^2
+
C_{\mathrm{norm}} \mathcal{E}_q(n)^{1+\nu}
\sqrt{\frac{2^{3+2\nu}\log(1/\zeta)}{n}}
+
\frac{2 C_{\mathrm{norm}}^2 \mathcal{E}_q(n)^{2\nu} \log(1/\zeta)}{3n}
}.
\end{equation}

Minimizing $U(\delta)$ with respect to $\delta$ yields the optimal bandwidth
\[
\delta^\star = \left(\frac{c_2}{2 c_1}\right)^{1/3}.
\]
Substituting $\delta^\star$ back into $U(\delta)$, we obtain
\begin{equation}
\label{eq:U_delta_star}
\begin{aligned}
U(\delta^\star)/M_{\rho, \mathcal{G}}
&=
3 \left(\frac{c_1}{4}\right)^{1/3} c_2^{2/3}
\\
&=
C_{\mathrm{const}} 
B_w^2 (B_q''')^{1/3}
\left(
4 \mathcal{E}_q(n)^2
+
C_{\mathrm{norm}} \mathcal{E}_q(n)^{1+\nu}
\sqrt{\frac{2^{3+2\nu}\log(1/\zeta)}{n}}
+
\frac{2 C_{\mathrm{norm}}^2 \mathcal{E}_q(n)^{2\nu} \log(1/\zeta)}{3n}
\right)^{1/3},
\end{aligned}
\end{equation}
where $C_{\mathrm{const}} = 2 \times 3^{2/3}$.

We now invoke the fast-rate guarantee on the local neighborhood $\mathcal G_r$.
Specifically, under the local Rademacher complexity condition,
\begin{equation}
\label{eq:fast_rate_aux}
\mathcal{E}_q(n)
=
\|\hat q_\beta - q_\beta\|_{L_2(P_X)}
\le
C_{\mathrm{fast}}  \mathfrak R_n(\mathcal G)
=
O(n^{-1/2})
\end{equation}
with probability at least $1-\zeta$.

We conclude that
\[
U(\delta^\star)/M_{\rho, \mathcal{G}}
=
O\!\left(\mathfrak R_n(\mathcal G)^{2/3}\right),
\]
and in particular,
\[
U(\delta^\star)
=
O(n^{-1/3}),
\]
with probability at least $1-2\zeta$.

Combining Term I and Term II, with probability at least $1-3\zeta$, we have the final bound on the generalization error $\mathcal{R}(\hat{g})-\mathcal{R}\left(g^{\star}\right)$:
\begin{equation}
    \begin{aligned}
        \mathcal{R}(\hat{g})-\mathcal{R}\left(g^{\star}\right)
        &\leq
        2(\text{Term I} + \text{Term II})
        \\ 
        &\leq        
        4 B_w L_\rho \mathfrak{R}_n(\mathcal{G})+2B_w L_\rho \sigma_S \sqrt{\frac{2\log (1 / \zeta)}{n}}  
        \\
        &+
        2M_{\rho, \mathcal{G}}C_{\mathrm{const}} 
        B_w^2 (B_q''')^{1/3}
        \left(
        4 \mathcal{E}_q(n)^2
        +
        C_{\mathrm{norm}} \mathcal{E}_q(n)^{1+\nu}
        \sqrt{\frac{2^{3+2\nu}\log(1/\zeta)}{n}}
        +
        \frac{2 C_{\mathrm{norm}}^2 \mathcal{E}_q(n)^{2\nu} \log(1/\zeta)}{3n}.
    \right)^{1/3}        
    \end{aligned}    
\end{equation}
It is easy to check that the weight estimation error is the leading term with appropriately large $n$. Hence, we conclude that:
\begin{equation}
    \mathcal{R}(\hat{g})-\mathcal{R}\left(g^{\star}\right) = O(\mathfrak{R}_n(\mathcal{G})^{2/3}),
\end{equation}
which implies
\begin{equation}
    \mathcal{R}(\hat{g})-\mathcal{R}\left(g^{\star}\right) = O(n^{-1/3}).
\end{equation}

\begin{remark}
The probability $1-3\zeta$ follows from a union bound over three
high-probability events:
one controlling the generalization error in Term~I,
and two ensuring that the auxiliary estimators
$\hat q_{\tau+\delta}$ and $\hat q_{\tau-\delta}$
remain within the prescribed $L_2$-neighborhoods of
$q_{\tau+\delta}$ and $q_{\tau-\delta}$, respectively.
\end{remark}

\qed

\subsection{Proof of H\"older-type Norm Equivalence}\label{app:proof_of_norm_equivalence}

\begin{lemma}[Norm equivalence for H\"older classes]
\label{lem:holder_norm_comparison}
Let $\mathcal X \subset \mathbb R^d$ be compact with nonempty interior, and let
$P_X$ be a distribution supported on $\mathcal X$ whose density is bounded
away from zero and infinity.
Suppose $h \in C^s(\mathcal X)$ for some $s \in (0, 1]$ and
\[
\|h\|_{C^s(\mathcal X)} \le R .
\]
Then there exists a constant $C_{\mathrm H}>0$, depending only on
$(s,d,\mathcal X,P_X)$, such that
\begin{equation}
\label{eq:holder_norm_comparison_main}
\|h\|_{L_\infty(\mathcal X)}
\le
C_{\mathrm H}
R^{\frac{d}{2s+d}}
\|h\|_{L_2(P_X)}^{\frac{2s}{2s+d}} .
\end{equation}
\end{lemma}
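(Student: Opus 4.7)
\subsection*{Proof plan for Lemma~\ref{lem:holder_norm_comparison}}

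The plan is to use a standard ``bump vs.\ H\"older decay'' argument: if $h$ attains a large value at some point, then H\"older regularity forces $h$ to remain comparable to that value on a small ball, and integrating over that ball yields a lower bound on $\|h\|_{L_2(P_X)}$. Optimizing the radius of the ball will give the desired exponents $d/(2s+d)$ and $2s/(2s+d)$.

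First, I would set $M := \|h\|_{L_\infty(\mathcal X)}$ and pick $x_0 \in \mathcal X$ with $|h(x_0)| \ge M/2$ (using continuity, or replacing $M/2$ by $(1-\eta)M$ and letting $\eta \to 0$ at the end). The H\"older bound $\|h\|_{C^s(\mathcal X)} \le R$ gives $|h(x) - h(x_0)| \le R|x-x_0|^s$, so on the ball $B(x_0,\rho)$ we have
\begin{equation*}
|h(x)| \;\ge\; \tfrac{M}{2} - R\rho^s \;\ge\; \tfrac{M}{4}
\quad \text{whenever } \rho \le \rho^\star := \bigl(M/(4R)\bigr)^{1/s}.
\end{equation*}
Next, I would invoke an interior cone (or Lipschitz boundary) property of the compact set $\mathcal X$ to guarantee that $\operatorname{Vol}\bigl(B(x_0,\rho)\cap\mathcal X\bigr) \ge c_{\mathcal X}\,\rho^d$ uniformly in $x_0 \in \mathcal X$ for all $\rho$ small, and combine this with the density lower bound $dP_X/dx \ge \underline p > 0$ to obtain
\begin{equation*}
\|h\|_{L_2(P_X)}^2 \;\ge\; \underline p\int_{B(x_0,\rho^\star)\cap\mathcal X} h(x)^2\,dx \;\ge\; \underline p\,c_{\mathcal X}\,\bigl(M/4\bigr)^2 (\rho^\star)^d.
\end{equation*}

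Substituting $\rho^\star = (M/(4R))^{1/s}$ and simplifying yields
\begin{equation*}
\|h\|_{L_2(P_X)}^2 \;\ge\; c_1\,M^{2+d/s}\,R^{-d/s},
\end{equation*}
for a constant $c_1$ depending only on $(s,d,\mathcal X,P_X)$. Rearranging for $M$ gives
\begin{equation*}
M \;\le\; c_1^{-s/(2s+d)}\, R^{d/(2s+d)}\,\|h\|_{L_2(P_X)}^{2s/(2s+d)},
\end{equation*}
which is exactly \eqref{eq:holder_norm_comparison_main} with $C_{\mathrm H} := c_1^{-s/(2s+d)}$.

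The only nontrivial obstacle is the boundary case: when $x_0$ lies near $\partial\mathcal X$, an unadjusted Euclidean ball may have most of its volume outside $\mathcal X$, which would weaken the $L_2$ lower bound. I would handle this by assuming (or verifying from compactness plus nonempty interior) a uniform interior cone condition, which guarantees that for every $x_0 \in \mathcal X$ a full-dimensional cone of aperture and height bounded below by constants fits inside $\mathcal X \cap B(x_0,\rho)$, thereby preserving the $\Theta(\rho^d)$ volume bound up to a dimension-dependent constant absorbed into $c_{\mathcal X}$. For smoothness $s \in (0,1]$ this covers the range stated in the lemma; the extension to $s > 1$ (needed if the assumption in the main theorem uses larger $s$) would proceed identically using $|h(x)-h(x_0)| \le R|x-x_0|^{\min(s,1)}$, which is implied by the $C^s$ norm bound, so only the H\"older exponent in the decay step is affected. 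Finally, the Gagliardo--Nirenberg inequality provides an alternative route that yields the same exponents and could be cited as the more general form promised in the main text.
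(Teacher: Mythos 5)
Your proposal is correct and follows essentially the same approach as the paper: lower-bounding the local $L_2$ mass via H\"older decay on a ball around a near-maximizer, then optimizing the radius to balance the bias term $R\rho^s$ against the integrated mass, yielding exactly the exponents $d/(2s+d)$ and $2s/(2s+d)$. The only cosmetic difference is that you fix a threshold $M/4$ and choose $\rho^\star$ to enforce it, while the paper keeps both terms and balances them at the end; you also spell out the interior-cone volume bound that the paper leaves implicit, which is a slight improvement in rigor.
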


\begin{proof}
Fix any $x_0 \in \mathcal X$.
By the H\"older continuity of $h$ and the reverse triangle inequality, for all $x \in \mathcal X$, we have $|h(x)| \ge |h(x_0)| - R \|x-x_0\|^s$.
Let $B(x_0,r)$ denote the Euclidean ball of radius $r>0$ centered at $x_0$.
Assuming $r$ is small enough such that $|h(x_0)| > R r^s$, we can lower bound the $L_2$-norm on this local ball:
\begin{equation}
\label{eq:proof_local_l2_bound}
\|h\|_{L_2(P_X)}^2
\ge
\int_{B(x_0,r) \cap \mathcal{X}} h(x)^2 \, dP_X(x)
\gtrsim
r^d \left(|h(x_0)| - R r^s\right)^2 ,
\end{equation}
where the implicit constant depends on the density of $P_X$ and the geometry of $\mathcal X$.
Taking the square root and rearranging terms to isolate $|h(x_0)|$ yields the bias-variance decomposition:
\begin{equation}
\label{eq:proof_pointwise_bound}
|h(x_0)|
\lesssim
\underbrace{R r^s}_{\text{bias}} + \underbrace{\|h\|_{L_2(P_X)} r^{-d/2}}_{\text{variance proxy}}.
\end{equation}
(Note: If $|h(x_0)| \le R r^s$, this inequality holds trivially).
We optimize the right-hand side over $r>0$ by choosing the balancing radius:
\begin{equation}
\label{eq:proof_optimal_radius}
r \asymp
\left(
\frac{\|h\|_{L_2(P_X)}}{R}
\right)^{\frac{2}{2s+d}} .
\end{equation}
Substituting \eqref{eq:proof_optimal_radius} back into \eqref{eq:proof_pointwise_bound}, we obtain
\[
|h(x_0)|
\lesssim
R^{\frac{d}{2s+d}}
\|h\|_{L_2(P_X)}^{\frac{2s}{2s+d}} .
\]
Since $x_0$ is arbitrary, taking the supremum over $x_0 \in \mathcal X$ completes the proof.
\end{proof}

\begin{remark}[Connection to Gagliardo-Nirenberg Inequality]
The inequality \eqref{eq:holder_norm_comparison_main} constitutes a specific instance of the celebrated Gagliardo-Nirenberg interpolation inequality. In its general form on a bounded domain satisfying the cone condition, for indices $1 \le q, r \le \infty$ and $j < m$, the inequality states:
\begin{equation}
\label{eq:GN_general}
\|D^j u\|_{L_p} \le C_1 \|D^m u\|_{L_r}^\theta \|u\|_{L_q}^{1-\theta} + C_2 \|u\|_{L_q}.
\end{equation}
In our setting, we aim to bound the function value ($j=0$) in the $L_\infty$-norm ($p=\infty$) using the $L_2$-norm ($q=2$). The H\"older smoothness constraint $\| \cdot \|_{C^s} \le R$ serves as the high-order derivative control, corresponding to the limiting case of Sobolev regularity with $m=s$ and $r=\infty$. The interpolation parameter $\theta$ is governed by the dimensional scaling relation:
\begin{align}
\frac{1}{p} &= \frac{j}{d} + \theta\left(\frac{1}{r} - \frac{m}{d}\right) + \frac{1-\theta}{q} \notag \\
\Longrightarrow \quad 0 &= 0 + \theta\left(0 - \frac{s}{d}\right) + \frac{1-\theta}{2}. \label{eq:GN_scaling}
\end{align}
Solving \eqref{eq:GN_scaling} for $\theta$ yields $\theta = \frac{d}{2s+d}$. Consequently, the exponent for the $L_2$-term is $1-\theta = \frac{2s}{2s+d}$, which precisely matches the rate in \eqref{eq:holder_norm_comparison_main}.
\end{remark}

\newpage

\section{Supplementary Material}

\subsection{Gap between $F_{S\mid X}(\hat q(x))$ and $\pi(x)$}\label{app:gap_results}

In this part, we supplement the results proved in~\citet{plassier2025rectifying}. 

\begin{assumption}
\label{ass:residual_regularity}
Let $R = S - \hat q_\tau(X)$ be the random variable representing the residuals of scores. We assume the densities $f_R$ and $f_{R\mid X}$ exists for each $x\in \mathcal{X}$, and the likelihood ratio is bounded for each $x\in\mathcal{X}$:
    $$
    \sup_{r \in \mathbb{R}} \frac{f_{R|X=x}(r)}{f_R(r)} \le \Lambda.
    $$
\end{assumption}
Here, we adopt a slightly stronger~\footnote{We assume the existence of the PDF, while in~\cite{plassier2025rectifying}, the continuity of CDF is assumed.} version than~\cite{plassier2025rectifying} and suppress complex notation to enhance readability. Assumption~\ref{ass:residual_regularity} essentially characterizes the quality of the quantile estimator, i.e., to what extent the quantile estimator removes the covariate dependency on scores.

\begin{theorem}[Theorem 2 in \citet{plassier2025rectifying}]
\label{thm:conditional_bounds}
Suppose Assumption \ref{ass:residual_regularity} holds. For any target level $\tau$ such that $(1-\tau) \in ((m+1)^{-1}, 1)$, the deviation between the true and implied coverage is bounded by:
\begin{equation}
    -\Delta_{m, \tau}^- \le \pi(x) - F_{S\mid X}(\hat q_\tau(x)) \le \Delta_{m, \tau}^+,
\end{equation}
where the strictly positive slack terms are given by:
\begin{align}
    \Delta_{m, \tau}^- &= \Lambda \left( 1 - \frac{k_{cal}}{m+1} \right) \left[ F_R(0) \right]^{m+1}, \\
    \Delta_{n, \tau}^+ &= \Lambda \left( \frac{k_{cal}}{m+1} \right) \left[ 1 - F_R(0) \right]^{m+1},
\end{align}
with $k_{cal} = \lceil (m+1)\tau \rceil$ being the conformal score rank, and $F_R(0) = \mathbb{P}(S \le \hat q_\tau(X))$ being the marginal coverage of the uncalibrated estimator.
\end{theorem}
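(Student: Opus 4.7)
The plan is to translate everything to the level of the rectified residual $R = S - \hat q_\tau(X)$, use the likelihood-ratio bound in Assumption \ref{ass:residual_regularity} to convert conditional probabilities into marginal ones, and then exploit exchangeability of the calibration residuals to obtain a closed-form expression.

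First I would observe that the prediction set $\mathcal{C}(x) = \{y : S(x,y) \le \hat q_\tau(x) + \hat\gamma\}$ depends on $y$ only through $R$, and that $\hat\gamma$ is a function of $\mathcal{D}_{\text{cal},3}$ alone, hence independent of $(X_{\text{test}}, Y_{\text{test}})$. Conditioning on $\hat\gamma$ and using $F_{S|X}(\hat q_\tau(x)) = F_{R|X=x}(0)$ yields the clean decomposition
\[
\pi(x) - F_{S|X}(\hat q_\tau(x))
= \mathbb{E}\bigl[F_{R|X=x}(\hat\gamma) - F_{R|X=x}(0)\bigr].
\]
Splitting the integrand into positive and negative parts gives the two one-sided bounds. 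For the upper slack, bound $\mathbb{E}[\cdot]$ by $\mathbb{E}[(\cdot)_+]$ and invoke the likelihood-ratio bound in integral form: for $\hat\gamma > 0$, $\int_0^{\hat\gamma} f_{R|X=x}(r)\,dr \le \Lambda \int_0^{\hat\gamma} f_R(r)\,dr$, yielding
\[
\pi(x) - F_{S|X}(\hat q_\tau(x)) \le \Lambda\,\mathbb{E}\bigl[(F_R(\hat\gamma) - F_R(0))_+\bigr].
\]
An entirely parallel argument on $\{\hat\gamma \le 0\}$ produces the lower slack with $\Lambda\,\mathbb{E}[(F_R(0)-F_R(\hat\gamma))_+]$.

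The remaining problem is marginal. I would use the classical exchangeability trick: introduce a phantom $R' \sim F_R$ independent of the calibration set, and apply the probability-integral-transform identity
\[
\mathbb{E}[(F_R(\hat\gamma) - F_R(0))_+] = \mathbb{P}(0 < R' \le \hat\gamma).
\]
Since $(R_1,\ldots,R_m,R')$ is exchangeable, the event $\{R' \le \hat\gamma\}$ corresponds to the rank of $R'$ in the augmented sample being at most $k_{cal}$, an event of probability $k_{cal}/(m+1)$. Conditioning on the number of augmented residuals that are non-positive, $N_- \sim \text{Bin}(m+1, F_R(0))$, and using that ranks are uniform inside each sign-group, the desired probability reduces to the binomial expression
\[
\mathbb{P}(0 < R' \le \hat\gamma)
= \frac{1}{m+1}\sum_{j=0}^{k_{cal}-1}(k_{cal}-j)\binom{m+1}{j}F_R(0)^j\bigl[1-F_R(0)\bigr]^{m+1-j}.
\]
The mirror argument yields the symmetric expression underlying $\Delta_{m,\alpha}^-$.

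The main obstacle is the last step. Isolating the leading $j=0$ term gives exactly $(k_{cal}/(m+1))[1-F_R(0)]^{m+1}$, which corresponds to the event that all $m+1$ augmented residuals are strictly positive, and similarly the $j=m+1$ term recovers $\Delta_{m,\alpha}^-$. Recasting the full binomial sum as the stated compact bound is therefore not a triviality of probability but a combinatorial compression that needs either a sharpening of Assumption \ref{ass:residual_regularity} (e.g., a two-sided likelihood-ratio control $f_{R|X=x}/f_R \in [\Lambda^{-1},\Lambda]$ that forces the non-leading terms to cancel) or a more delicate pathwise domination. This combinatorial collapse, rather than any of the preceding probabilistic manipulations, is where the proof truly lives, and I would invest the bulk of the technical effort there.
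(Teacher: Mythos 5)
First, note that the paper itself offers no proof of this statement: it is imported verbatim, with citation, from \citet{plassier2025rectifying}, and Appendix~\ref{app:gap_results} contains only the statement and a remark. So there is no in-paper argument to compare yours against, and your attempt must be judged on its own. Its first three steps are sound and are the natural route: the identity $\pi(x)-F_{S\mid X}(\hat q_\tau(x))=\mathbb{E}\bigl[F_{R\mid X=x}(\hat\gamma)-F_{R\mid X=x}(0)\bigr]$, the one-sided conversion to marginal CDFs via the likelihood-ratio bound of Assumption~\ref{ass:residual_regularity}, and the exchangeability/rank computation giving
\[
\mathbb{E}\bigl[(F_R(\hat\gamma)-F_R(0))_+\bigr]
=\frac{1}{m+1}\sum_{j=0}^{k_{cal}-1}(k_{cal}-j)\binom{m+1}{j}F_R(0)^j\bigl[1-F_R(0)\bigr]^{m+1-j}
\]
are all correct.

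The gap you flag at the end, however, is not closable, and you should not invest effort in the hoped-for ``combinatorial compression.'' Every term of your sum is nonnegative, so the sum strictly exceeds its $j=0$ term whenever $k_{cal}\ge 2$ and $F_R(0)\in(0,1)$; when $F_R(0)\approx k_{cal}/(m+1)$ it is of order $m^{-1/2}$, and it is of order $k_{cal}/(m+1)-F_R(0)$ when that difference is positive and larger---in neither case exponentially small. More decisively, the statement as transcribed fails already in the degenerate case $R\perp X$, where Assumption~\ref{ass:residual_regularity} holds with $\Lambda=1$ and your first identity gives \emph{exactly} $\pi(x)-F_{S\mid X}(\hat q_\tau(x))=k_{cal}/(m+1)-F_R(0)$, which is not bounded by $\frac{k_{cal}}{m+1}[1-F_R(0)]^{m+1}$ unless $F_R(0)$ coincides with $k_{cal}/(m+1)$ to exponential accuracy. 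Hence no two-sided sharpening of the likelihood-ratio assumption and no pathwise domination can rescue the claim: the defect lies in the statement as restated here (very plausibly a garbled transcription of the original---note also the inconsistent subscripts $\Delta^{-}_{m,\alpha}$ versus $\Delta^{+}_{n,\alpha}$ and $\Delta^{\pm}_{m,\tau}$), not in your argument. Your calculation is in effect a disproof of the bound as written; the honest conclusion of your approach is the non-exponential bound $\pi(x)-F_{S\mid X}(\hat q_\tau(x))\le\Lambda\,\mathbb{P}(0<R'\le\hat\gamma)$ with the exact binomial expression you derived, and the correct citation-level statement would have to be recovered from the source.
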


\begin{remark}
    Since we assume the consistency of $\hat q_\tau$, we have $F_R(0)$ converges to $\tau$ in probability. Thus, the slack terms decay exponentially with $m$, implying that $F_{S\mid X}(\hat q_\tau(x))$ converges exponentially fast to $\pi(x)$.
\end{remark}
We note that the conformalization step is still substantial when $n$ is small (e.g., smaller than 100), while $\Lambda$ may also decrease as the quantile estimator improves (and originally, the size of samples used for training the quantile estimator). Hence, with a reasonably large sample size of $\mathcal{D}_{\text{cal}}$, we can focus on characterizing the finite-sample performance of our method by analyzing the MSE of $F_{S \mid X}\left(\hat q_\tau(x)\right)$, and naturally the surrogate, expected risk of density-weighted pinball loss.


\end{document}